\newtheorem{claim}{Claim}[section]
\newtheorem{thm}{Theorem}[section]
\def\eq#1{(\ref{#1})}
\def\beginmat{ \left( \begin{array} }
\def\endmat{ \end{array} \right) }
\def\log{{\rm log}}
\def\tr{{\rm tr}}
\def\cond{\, | \,}
\newcommand{\bbeta}{{\bm{\beta}}}
\newcommand{\tbbeta}{{\widetilde{\bm{\beta}}}}
\newcommand{\bb}{\mathbf{b}}
\newcommand{\bx}{\mathbf{x}}
\newcommand{\by}{\mathbf{y}}
\newcommand{\bw}{\mathbf{w}}
\newcommand{\bK}{\mathbf{K}}
\newcommand{\bV}{\mathbf{V}}
\newcommand{\bX}{\mathbf{X}}
\newcommand{\bH}{\mathbf{H}}
\newcommand{\bI}{\mathbf{I}}
\newcommand{\V}{\mathbb{V}}
\newcommand{\N}{{\cal N}}
\newcommand{\bvarepsilon}{\boldsymbol\varepsilon}
\newcommand{\tbeta}{{\widetilde{\beta}}}
\newcommand{\btheta}{\boldsymbol\theta}
\newcommand{\bphi}{\boldsymbol\phi}
\newcommand{\balpha}{\boldsymbol\alpha}
\newcommand{\bmu}{\boldsymbol\mu}
\newcommand{\bDelta}{\boldsymbol\Delta}
\newcommand{\bOmega}{\boldsymbol\Omega}
\newcommand{\bLambda}{\boldsymbol\Lambda}
\newcommand*\diff{\mathop{}\!\mathrm{d}}
\def\f{{\bm f}} 
\def\bX{{\mathbf X}}
\newcommand{\X}{{\bf{X}}}
\newcommand{\x}{{\bf{x}}}
\newcommand{\y}{{\bf{y}}}
\newcommand{\cN}{{\cal N}}
\newcommand{\T}{\intercal}
\begin{document}

\title{Interpreting Deep Neural Networks Through Variable Importance}

\author{\name Jonathan Ish-Horowicz \email jonathan.ish-horowicz17@imperial.ac.uk \\
       \addr Department of Mathematics, Imperial College London\\
       London SW7 2AZ, UK
       \AND
       \name Dana Udwin \email dana\_udwin@brown.edu \\
       \addr Department of Biostatistics, Brown University\\
       Providence, RI, USA
       \AND
       \name Kayla Scharfstein \email kayla\_scharfstein@brown.edu \\
       \addr Division of Applied Mathematics, Brown University\\
       Providence, RI, USA
       \AND
       \name Seth Flaxman \email s.flaxman@imperial.ac.uk\\
       \addr Department of Mathematics, Imperial College London\\
       London SW7 2AZ, UK
       \AND
       \name Lorin Crawford \email lorin\_crawford@brown.edu\\
       \addr Department of Biostatistics, Brown University\\
       Providence, RI, USA
       \AND
       \name Sarah Filippi \email s.filippi@imperial.ac.uk\\
       \addr Department of Mathematics, Imperial College London\\
       London SW7 2AZ, UK}

\editor{Editor names}
\maketitle


\begin{abstract}
    While the success of deep neural networks is well-established across a variety of domains, our ability to explain and interpret these methods is limited. Unlike previously proposed \textit{local} methods which try to explain particular classification decisions, we focus on \textit{global} interpretability and ask a generally applicable, yet understudied, question: given a trained model, which input features are the most important? In the context of neural networks, a feature is rarely important on its own, so our strategy is specifically designed to leverage partial covariance structures and incorporate variable interactions into our proposed feature ranking. Here, we extend the recently proposed ``RelATive cEntrality'' (RATE) measure \citep{crawford2018variable} to the Bayesian deep learning setting. Given a trained network, RATE applies an information theoretic criterion to the posterior distribution of effect sizes to assess feature significance. Importantly, unlike competing approaches, our method does not require tuning parameters which can be costly and difficult to select. We demonstrate the utility of our framework on both simulated and real data.
\end{abstract}

\begin{keywords}
variable importance, relative centrality, global interpretability, Bayesian computation, hierarchical models
\end{keywords}


\section{Introduction} \label{sec:introduction}

Due to their high predictive performance, deep neural networks (DNNs) have become increasingly ubiquitous in many fields including computer vision and natural language processing \citep{lecun2015deep}. Unfortunately, DNNs operate as ``black boxes'': users are rarely able to understand the internal workings of the network. As a result, DNNs have not been widely adopted in scientific settings, where variable selection tasks are often as important as prediction --- one particular example being the identification of biomarkers related to the progression of a disease. While DNNs are beginning to be used in high-risk decision-making fields (e.g., automated medical diagnostics or self-driving cars \citep{lundervold2019overview}), it is critically important that methods do not make predictions based on artefacts or biases in the training data. Therefore, there is both a strong theoretical and practical motivation to increase the global interpretability of DNNs, and to better characterize the types of relationships upon which they rely.

The increasingly important concept of interpretability in machine learning still lacks a well-established definition in the literature. Despite recent surveys \citep{guidotti2018survey,carvalho2019machine} and proposed guidelines \citep{hall2019guidelines} to address this issue, conflicting views on how interpretability should be evaluated still remain. Variable importance is one possible approach to achieve global interpretability, where the goal is to rank each input feature based on its contributions to predictive accuracy. This is in contrast to local interpretability, which aims to simply provide an explanation behind a specific prediction or group of predictions \citep{arya2019one,clough2019global}. In this paper, we follow a definition which refers to interpretability as ``the ability to explain or to present in understandable terms to a human'' \citep{doshi2017towards}. To this end, our main contribution is focused on global interpretability: we address the problem of identifying important predictor variables given a trained neural network, focusing especially on settings in which variables (or groups of variables) are intrinsically meaningful.

Here, we describe an approach to interpret deep neural networks using ``RelATive cEntrality'' (RATE) \citep{crawford2018variable}, a recently-proposed variable importance criterion for Bayesian models. This flexible approach can be used with any network architecture where some notion of uncertainty can be computed over the predictions. The rest of the paper is structured as follows. Section \ref{sec:recent-work} outlines related work on the interpretation of deep neural networks. Section \ref{sec:background} describes the RATE computation within the context for which it was originally proposed (Gaussian process regression). Section \ref{sec:contributions} contains the main methodological innovations of this paper. Here, we present a unified framework under which RATE can be applied to deep neural networks based on variational Bayes. In Section \ref{sec:results}, we demonstrate the utility of our method in various simulation scenarios and real data applications, and compare to competing approaches. Section \ref{sec:grouprate} describes an extension of RATE to calculate the importance of groups of variables (groupRATE) and uses it to assess gene importance in genome-wide association studies.



\section{Related Work} \label{sec:recent-work}

In the absence of a robustly defined metric for interpretability, most work on DNNs has centered around locally interpretable methods with the goal to explain specific classification decisions with respect to input features \citep{bach2015pixel,ribeiro2016should,shrikumar2016not,ancona2017towards,sundararajan2017axiomatic,adebayo2018sanity}. In this work, we focus instead on global interpretability where the goal is to identify predictor variables that best explain the overall performance of a trained model. Previous work in this context have attempted to solve this issue by selecting inputs that maximize the activation of each layer within the network \citep{erhan2009visualizing}. Another viable approach for achieving global interpretability is to train more conventional statistical methods to mimic the predictive behavior of a DNN. This ``student'', or or ``mimic'' model is then retrospectively used to explain the predictions that a DNN would make at a global level (contrasting with . Such mimic models are typically trained on the soft labels (the predicted probabilities) output by the network, as these are often more informative than the corresponding hard (class) labels \citep{ba2014deep,hinton2015distilling,che2016interpretable}.

For example, using a decision tree \citep{frosst2017distilling,kuttichira2019explaining} or falling rule list \citep{wang2015falling} can yield straightforward characterizations of predictive outcomes. Unfortunately, these simple models can struggle to mimic the accuracy of DNNs effectively. A random forest (RF) or gradient boosting machine (GBM), on the other hand, is much more capable of matching the predictive power of DNNs. Measures of feature importance can be computed for RFs and GBMs by permuting information within the input variables and examining this null effect on test accuracy, or by calculating Mean Decrease Impurity (MDI) \citep{breiman2001random}. The ability to establish variable importance in random forests is a significant reason for their popularity in fields such as the life and clinical sciences \citep{chen2007forest}, where random forest and gradient boosting machine mimic models have been used as interpretable predictive models for patient outcomes \citep{che2016interpretable}. A notable drawback of RFs and GBMs is that it can take a significant amount of training time to achieve accuracy comparable to the DNNs that they serve to mimic. This provides motivation for our direct approach, avoiding the need to train a separate model.


\section{Relevant Background} \label{sec:background}

In this section, we give a brief review on previous results that are relevant to our main methodological innovations. Throughout, we assume access to some trained Bayesian model, with the ability to draw samples from its posterior predictive distribution. This reflects the \textit{post-hoc} nature of our objective of finding important subsets of variables.

\subsection{Effect Size Analogues for Kernel Regression Models}

Assume that we have an $n$-dimensional response vector $\y$ and an $n\times p$ design matrix $\bX$ with $p$ covariates. To begin, we consider a standard linear regression model where 
\begin{align}
\y = \f + \bvarepsilon, \quad \quad \f = \X\bbeta, \quad \quad \bvarepsilon\sim\N(\bm{0},\tau^2\bI) \,,
\end{align}
where $\bbeta$ is a $p$-dimensional vector of additive effect sizes, $\bvarepsilon$ is an $n$-dimensional vector of error terms that are assumed to follow a multivariate normal distribution with mean zero and scaled variance term $\tau^2$, and $\bI$ is an identity matrix. In classical statistics, a least squares estimate of the regression coefficients is defined as the projection of the response variable onto the column space of the data: $\mbox{Proj}({\mathbf X},\y) = \X^{\dagger} {\mathbf y}$, with $\X^{\dagger}$ being the Moore-Penrose pseudo-inverse. In the Bayesian nonparametric setting, we relax the additive assumption in the covariates and consider a learned nonlinear function $\f$ that has been evaluated on the $n$-observed samples \citep{Kolmogorov:1960aa,Scholkopf:2001aa,scholkopf2002learning}
\begin{align}
\y = \f + \bvarepsilon, \quad \quad \f \sim\N(\bm{0},\bK), \quad \quad \bvarepsilon\sim\N(\bm{0},\tau^2\bI) \,,
\end{align}
where, in addition to previous notation and without loss of generality, $\f = [f_1(\x_1),\ldots,f_n(\x_n)]^{\T}$ is assumed to come from a multivariate normal with mean $\bm{0}$ and covariance (or kernel) matrix $\bK$. The above probabilistic model is commonly referred to as a ``weight-space'' view on Gaussian processes \citep{rasmussen2006gaussian}. Generally, this class of models posit that $\f$ lives within a reproducing kernel Hilbert space (RKHS) defined by some nonlinear covariance function $k_{rs} = k(\x_r,\x_s)$ for each element in $\bK$. Many of these covariance functions have been shown to implicitly account for higher-order interactions between features, which often lead to more accurate predictions for complex data types (e.g., the radial basis function) \citep{Cotter:2011aa,Jiang:2015aa}. 

The effect size analogue, denoted $\tbbeta$, represents the nonparametric equivalent to coefficient estimates in linear regression using generalized ordinary least squares. This can then be defined as the result of projecting the learned nonlinear vector $\bm{f}$ onto the original design matrix $\X$,  
\begin{align}
    \tbbeta = \mbox{Proj}({\mathbf X},\bm{f}). \label{eq:geff}
\end{align}
Some intuition can be gained as follows. After having fit a probabilistic model, we consider the fitted values $\f$ and regress these predictions onto the input variables so as to see how much variance these features explain. This is a simple way of understanding the relationships that the model has learned. The coefficients produced by this linear projection have their normal interpretation: they provide a summary of the relationship between the covariates in $\X$ and $\f$. For example, while holding everything else constant, increasing some feature $\x_j$ by $1$ will increase $\f$ by $\widetilde{\beta}_j$. In the case of kernel machines, theoretical results for identifiability and sparsity conditions of the effect size analogue have been previously developed when using the Moore-Penrose pseudo-inverse as the projection operator \citep{crawford2018bayesian}.  

\subsection{Variable Importance using Relative Centrality Measures}

Similar to regression coefficients in linear models, effect size analogues are not used to solely determine variable significance. Indeed, there are many approaches to infer associations based on the magnitude of effect size estimates, but many of these techniques rely on arbitrary thresholding and fail to account for key covarying relationships that exist within the data. The ``RelATive cEntrality'' measure (or RATE) was developed as a \textit{post-hoc} approach for variable selection that mitigates these concerns \citep{crawford2018variable}.

Consider a sample from the predictive distribution of $\tbbeta$, obtained by iteratively transforming draws from the posterior of $\f$ via the deterministic projection specified in Equation \eq{eq:geff}. The RATE criterion summarizes how much any one variable contributes to the total information the model has learned. Effectively, this is done by taking the Kullback-Leibler divergence (KLD) between \textit{(i)} the conditional posterior predictive distribution $p(\widetilde{\bm{\beta}}_{-j}\cond \widetilde{\beta}_j=0)$ with the effect of the $j$-th predictor being set to zero, and \textit{(ii)} the marginal distribution $p(\widetilde{\bm{\beta}}_{-j})$ with the effects of the $j$-th predictor being integrated out. In this work, we denote the RATE criterion as
$$\gamma_j= \dfrac{\text{KLD}_j}{\sum_{k}\text{KLD}_k}\,,$$
where $\gamma_j$ quantifies the importance of the $j$ variable in the model and
\begin{align}
    \text{KLD}_j &:= \text{KL}\left(p(\widetilde{\bm{\beta}}_{-j})\,\|\, 
    p(\widetilde{\bm{\beta}}_{-j}\cond \widetilde{\beta}_j=0)\right) = \int\log\left(\frac{p(\widetilde{\bm{\beta}}_{-j})}{p(\widetilde{\bm{\beta}}_{-j}\cond \widetilde{\beta}_j=0)}\right) p(\widetilde{\bm{\beta}}_{-j}) \diff\widetilde{\bm{\beta}}_{-j}.
    \label{eq:KLD}
\end{align}
Note that the $\text{KLD}_j$ is a non-negative quantity, and equals zero if and only if the $j$-th variable is of little importance, since removing its effect has no influence on the other variables. In addition, the RATE criterion is bounded within the range $\gamma_j\in[0,1]$ and has the natural interpretation of measuring a variable's relative entropy --- with a higher value equating to more importance. 


\section{Methodological Contributions} \label{sec:contributions}

We now detail the main methodological contributions of this paper. First, we describe our motivating deep neural network framework, which is based on variational Bayesian inference. Next, we propose a new effect size analogue projection that is more robust to collinear input data. Lastly, we derive closed-form solutions for the RATE methodology under this new framework. While this framework satisfies the requirements for RATE, we can also calculate RATE values for any neural network for which we can sample from the posterior predictive distribution. However, this framework provides one key advantage for RATE in that it permits these closed-form solutions.

\subsection{Motivating Neural Network Architecture} \label{subsec:bnn-arch-for-rate}

In order to make RATE amenable for deep learning, we are required to take a probabilistic view on prediction. This is possible by using a Bayesian neural network (BNN). In contrast to  a ``standard'' neural network, which uses maximum likelihood point-estimates for its parameters, a Bayesian neural network assumes a prior distribution over its weights. The posterior probability over the weights, learned during the training phase, can then be used to compute the posterior predictive distribution. Once again, we consider a general predictive task with an $n$-dimensional set of response variables and an $n\times p$ design matrix $\X$ with $p$ covariates. For this problem, we assume the following hierarchical network architecture to learn the predicted response for each observation in the data
\begin{align}
\widehat{\y} = \sigma(\f), \quad \f = \mathbf{H}(\boldsymbol{\theta})\textbf{w} + \textbf{b}, \quad \textbf{w} \sim \pi(\cdot) \, , \label{eq:logit-batch}
\end{align}
where $\sigma(\cdot)$ is a link function, $\btheta$ is a vector of inner layer weights, and $\bm{f}$ is an $n$-dimensional vector of smooth latent values or ``functions'' that need to be estimated. These function values are also known as logits. Here, we use $\bH(\btheta)$ to denote an $n\times l$ matrix of activations from the penultimate layer (which are fixed given a set of inputs $\bX$ and point estimates for the inner layer weights $\btheta$), $\bw\sim \pi$ is a $l$-dimensional vector of weights at the output layer assumed to follow prior distribution $\pi$, and $\bb$ is an $n$-dimensional vector of the deterministic bias that is produced during the training phase.

The hierarchical structure of Equation \eqref{eq:logit-batch} is motivated by the fact that we are most interested in the posterior distribution of the latent variables when computing the effect size analogues and, subsequently, interpretable RATE measures. To this end, we may logically split the network architecture into three key components: \textit{(i)} an input layer of the original predictor variables, \textit{(ii)} hidden layers where parameters are deterministically computed, and \textit{(iii)} the outer layer where the parameters and activations are treated as random variables. Since the resulting functions are a linear combination of these components, their joint distribution will be closed-form if the posterior distribution of the weight parameters can also be written in closed-form. By restricting that only the weights in the outer layer are random also brings computational benefits during network training as it drastically reduces the number of parameters (versus learning a posterior for every parameter in the network).

There are two important features that come with this neural network setup. First, we may easily generalize this type of architecture to different predictive tasks through the link function $\sigma(\cdot)$. For example, we may apply our model to the binary classification problem by increasing the number of output nodes to match the number of categories, and redefining link function to be the sigmoid function. Regression is even simpler: we let the link function be the identity. Second, the structure of the hidden layers can be of any size or type, provided that we have access to draws of the posterior predictive distribution for the response variables. Ultimately, this flexibility means that a wide range of existing probabilistic network architectures can be easily modified to be used with RATE. The simplest example of such an architecture is illustrated in Figure~\ref{fig:bnn-example-architecture}.

\begin{figure}[ht]
\centering
\includegraphics[width=0.64\textwidth]{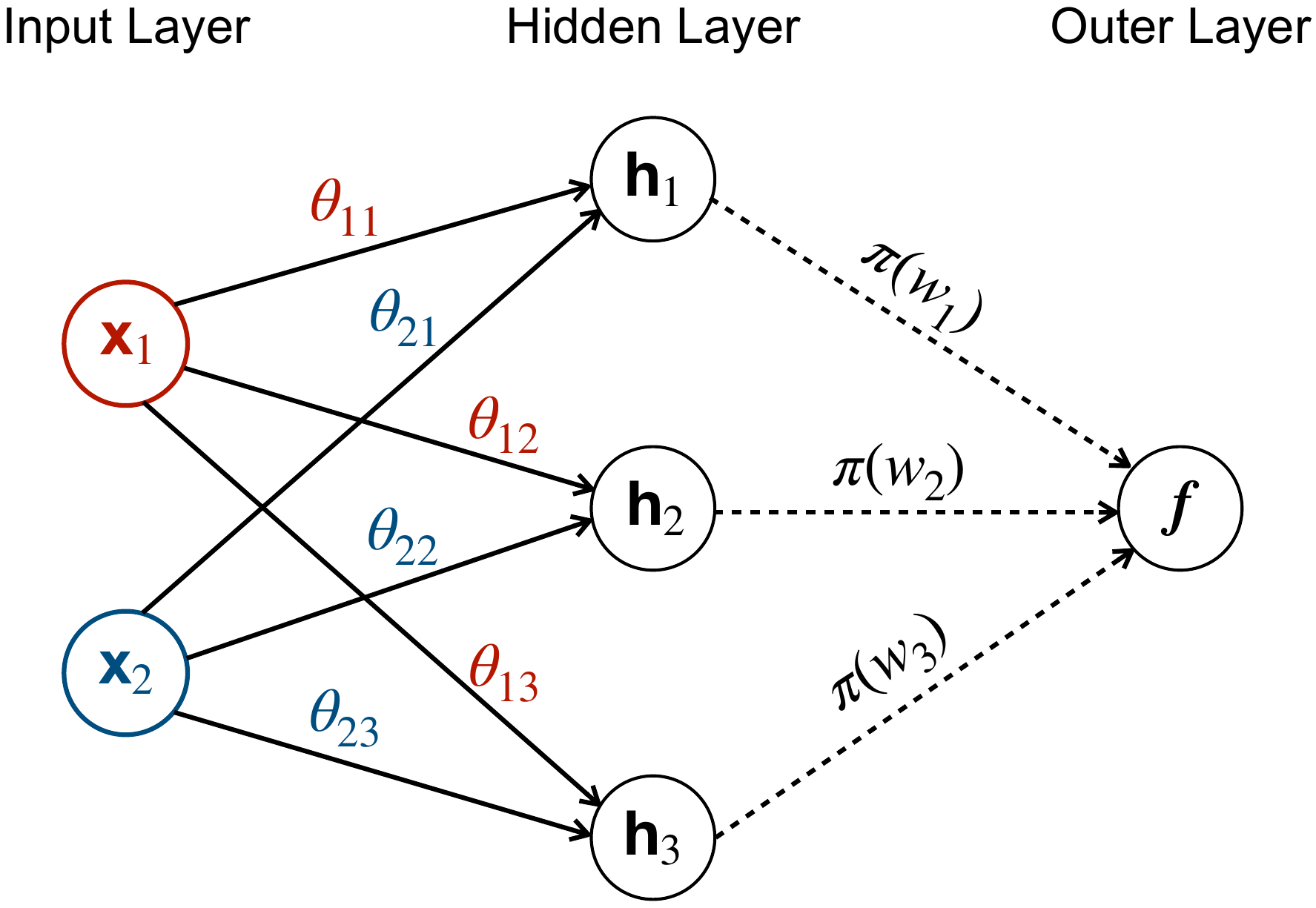}
	\caption{An example of the probabilistic neural network architecture used in this work. The first layer weights $\boldsymbol{\theta}$ are point estimates, while the outer layer weights $\mathbf{w}$ are assumed to be random variables following the prior distribution $\pi=(\pi(w_1),\pi(w_2),\pi(w_3))$. The input variables $\bf{x}_1$ and $\bf{x}_2$ are fed through the hidden layers $(\mathbf{h}_1,\mathbf{h}_2,\mathbf{h}_3)$ as weighted linear combinations of the $\btheta$. Estimates of the predicted functions $\f$ are obtained via a linear combination of the activations and samples from the posterior distribution of the outer layer weights $(w_1,w_2,w_3)$. Note that this figure does not include the deterministic bias terms used in Equation \eq{eq:logit-batch}.}
\label{fig:bnn-example-architecture}
\end{figure}

\subsection{Posterior Inference with Variational Bayes} \label{subsec:post-inf-var-bayes}

As the size of datasets in many application areas continues to grow, it has become less feasible to implement traditional Markov Chain Monte Carlo (MCMC) algorithms for inference. This has motivated approaches for supervised learning that are based on variational Bayes and the stochastic optimization of a variational lower bound \citep{hinton1993keeping,barber1998ensemble,graves2011practical}. In this work, we use variational Bayes because it has the additional benefit of providing closed-form expressions for the posterior distribution of the weights in the outer layer $\mathbf{w}$ --- and, subsequently, the functions $\f$. Here, we first specify a prior $\pi(\bw)$ over the weights and replace the intractable true posterior $p(\mathbf{w}\cond\y) \propto p(\y \cond \mathbf{w}) \pi(\mathbf{w})$ with an approximating family of distributions $q_{\boldsymbol{\phi}}(\mathbf{w})$. The variational parameters $\boldsymbol{\phi}$ are selected by minimizing the divergence $\mbox{KL}(q_{\boldsymbol{\phi}}(\mathbf{w})\,\|\,p(\mathbf{w}\cond\y))$, with respect to $\boldsymbol{\phi}$, with the goal of selecting the member of the approximating family that is closest to the true posterior. This is equivalent to maximizing the so-called variational lower bound. 

Since the architecture specified in Equation \eq{eq:logit-batch} contains point estimates at the hidden layers, we cannot train the network by simply maximizing the lower bound with respect to the variational parameters. Instead, all parameters must be optimized jointly as follows:
\begin{equation}
\underset{\boldsymbol{\phi},\btheta}{\arg\max} -\text{KL} (q_{\boldsymbol{\phi}}(\mathbf{w}) \, \| \, \pi(\mathbf{w})) +\mathbb{E}_{q_{\boldsymbol{\phi}}(\mathbf{w})} \left[\log \, p(\y\cond\mathbf{w}, \boldsymbol{\theta}) \right].
\end{equation}
We then use stochastic optimization to train the network. Depending on the chosen variational family, the gradients of the minimized $\text{KL} (q_{\boldsymbol{\phi}}(\mathbf{w}) \, \| \, \pi(\mathbf{w}))$ may be available in closed-form, while gradients of the log-likelihood $\log\, p(\y\cond\mathbf{w}, \boldsymbol{\theta})$ are evaluated using Monte Carlo samples and the local reparameterization trick \citep{kingma2015variational}. Following this procedure, we obtain an optimal set of parameters for $q_{\boldsymbol{\phi}}(\mathbf{w})$, with which we can sample posterior draws for the outer layer.

In this work, we choose independent Gaussians as the family of approximating distributions
\begin{align}
q_{\bphi}(\mathbf{w}) = \cN(\mathbf{m},\bV), \quad \bphi = \{\mathbf{m},\mathbf{V}\}, \label{eq:fac-q-phi}
\end{align}
with mean vector $\mathbf{m}$ and a diagonal covariance matrix $\bV$. This makes the mean-field assumption that the variational posterior fully factorizes over the elements of $\mathbf{w}$ \citep{blei2017variational}. One advantage of this choice is that it ensures that the predicted functions $\f$ will follow a multivariate Gaussian distribution as well. Using Equations \eqref{eq:logit-batch} and \eqref{eq:fac-q-phi}, we may derive the implied distribution over the latent values using the affine transformation property
\begin{align}
\f\cond\X,\y \sim \mathcal{N}(\mathbf{H}(\btheta)\mathbf{m} + \bb, \bH(\btheta)\bV\bH(\btheta)^{\T}).\label{eq:logit-posterior}
\end{align}
While the elements of $\mathbf{w}$ are independent, dependencies in the input data (via the hidden activations $\bH(\btheta)$) induce a non-diagonal covariance between the elements of $\f$.

\subsection{Effect Size Analogue for Bayesian Neural Networks}

After having conducted (variational) Bayesian inference, we now have the posterior $p(\f\cond\X,\y)$ in closed-form (Equation \eq{eq:logit-posterior}), which we can use to define an effect size analogue for neural networks. We could use the Moore-Penrose pseudo-inverse as proposed in \citep{crawford2018variable} but, in the case of highly correlated inputs, this operator suffers from instability (see a small simulation study in Supplementary Material), explaining the well-known phenomenon of linear regression  suffering in the presence of collinearity. While regularization poses a viable solution to this problem, the selection of an optimal penalty parameter is not always a straightforward task. As a result, we propose a much simpler projection operator that is particularly effective in application areas where data measurements can be perfectly collinear (e.g., pixels in an image). Our solution is to use a linear measure of dependence separately for each predictor based on the sample covariance. Namely, for each of the $p$ input variables
\begin{align}
\tbbeta = \mbox{Proj}(\bX,\bm{f}) = \text{cov}(\bX,\bm{f}) \,, \label{ESA}
\end{align}
where $\text{cov}(\bX,\bm{f}) = [\text{cov}(\x_1,\f),\ldots,\text{cov}(\x_p,\f)]$. Since it is based on the sample covariance, the effect size analogue has the form $\tbbeta = \X^{\T}\textbf{C}\f/(n-1)$ --- where $\textbf{C} = \bI - \bm{1}\bm{1}^{\T}/n$ denotes a centering matrix, $\bI$ is an $n$-dimensional identity matrix, and $\bm{1}$ is an $n$-dimensional vector of ones. Probabilistically, since the posterior of the function values $\f$ is normally distributed according to Equation \eqref{eq:logit-posterior}, the above is equivalent to assuming that $\tbbeta\cond\X,\y\sim\cN(\bmu,\bOmega)$ where 
\begin{align}
\boldsymbol{\mu} &=\frac{1}{n-1} \mathbf{X}^{\T}\textbf{C} \mathbf{H}(\btheta)\mathbf{m}, \quad \quad \boldsymbol{\Omega} =\frac{1}{(n-1)^2}\mathbf{X}^{\T}\textbf{C} \bH(\btheta)\bV\bH(\btheta)^{\T}\textbf{C}^{\T}\mathbf{X}.
\end{align}
Intuitively, each element in $\tbbeta$ represents some measure of how well the original data at the input layer explains the variation between observations in $\by$. Moreover, under this approach, if two predictors $\x_r$ and $\x_s$ are almost perfectly collinear, then the corresponding effect sizes will also be very similar since $\text{cov}(\x_r,\f)\approx\text{cov}(\x_s,\f)$. To build a better intuition for identifiability under this covariance projection, recall simple linear regression where ordinary least squares (OLS) estimates are unique modulo the span of the data \citep{Wold:1984aa}. A slightly different issue will arise for the effect size analogues computed via Equation \eq{ESA}, where now two estimates are unique modulo the span of a vector of ones, or $span\{\bm{1}\}$. We now make the following formal statement.

\begin{claim}
Two effect size analogues computed via the covariance projection operators, $\tbbeta_1 = \text{cov}(\bX,\bm{f}_1)$ and $\tbbeta_2 = \text{cov}(\X,\bm{f}_2)$, are equivalent if and only if the corresponding functions are related by $\f_1 = \f_2+c\bm{1}$, where $\bm{1}$ is a vector of ones and $c$ is some arbitrary constant.
\end{claim}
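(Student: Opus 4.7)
The plan is to reduce the claim to a linear-algebra question about the centering matrix $\mathbf{C} = \mathbf{I} - \bm{1}\bm{1}^{\T}/n$. Using the closed-form expression $\tbbeta = \mathbf{X}^{\T}\mathbf{C}\f/(n-1)$ established just after Eq.~(\ref{ESA}), the equality $\tbbeta_1 = \tbbeta_2$ is equivalent to
\begin{equation*}
\mathbf{X}^{\T}\mathbf{C}(\f_1 - \f_2) = \bm{0}.
\end{equation*}
So the whole claim amounts to characterizing the set of $\bg := \f_1 - \f_2$ that lie in the kernel of the linear operator $\bg \mapsto \mathbf{X}^{\T}\mathbf{C}\bg$, and showing this kernel coincides with $\mathrm{span}\{\bm{1}\}$.

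For the ``if'' direction, I would simply substitute $\f_1 = \f_2 + c\bm{1}$ and use the elementary identity $\mathbf{C}\bm{1} = \bm{1} - (\bm{1}\bm{1}^{\T}\bm{1})/n = \bm{0}$, which yields $\mathbf{C}(\f_1-\f_2) = c\,\mathbf{C}\bm{1} = \bm{0}$ and hence $\tbbeta_1 = \tbbeta_2$; this is a one-line computation. For the ``only if'' direction I would set $\bg = \f_1 - \f_2$, exploit the symmetry of $\mathbf{C}$ to rewrite the hypothesis as $(\mathbf{C}\mathbf{X})^{\T}\bg = \bm{0}$, and then decompose $\bg = \alpha\bm{1} + \bg^{\perp}$ with $\bg^{\perp} \perp \bm{1}$. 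Since every column of the centered design matrix $\widetilde{\mathbf{X}} := \mathbf{C}\mathbf{X}$ already lies in $\{\bm{1}\}^{\perp}$, the component $\alpha\bm{1}$ contributes nothing, and we are reduced to $\widetilde{\mathbf{X}}^{\T}\bg^{\perp} = \bm{0}$ with $\bg^{\perp} \in \{\bm{1}\}^{\perp}$. Invoking the natural rank condition that the centered columns of $\mathbf{X}$ span $\{\bm{1}\}^{\perp}$ (equivalently, $[\,\mathbf{X}\mid\bm{1}\,]$ has rank $n$) then forces $\bg^{\perp} = \bm{0}$, so $\bg = \alpha\bm{1}$ as required.

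The main obstacle is precisely this last rank assumption, which is not explicitly stated in the claim but is genuinely needed for the converse: without it, the kernel of $\mathbf{X}^{\T}\mathbf{C}$ could strictly contain $\mathrm{span}\{\bm{1}\}$, since any vector orthogonal to both $\bm{1}$ and $\mathrm{col}(\widetilde{\mathbf{X}})$ would also kill the projection. In the paper's motivating regime $p \gg n$ (images, genomic SNPs) this is generically satisfied, so the claim should be read under that mild identifiability hypothesis. Everything else is routine bookkeeping with $\mathbf{C}$ and orthogonal decomposition in $\mathbb{R}^{n}$.
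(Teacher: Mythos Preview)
Your argument is correct and, at its core, coincides with the paper's: the paper's entire proof is the single sentence that the claim ``follows directly from the covariance being invariant with respect to changes in location,'' which is precisely your identity $\mathbf{C}\bm{1}=\bm{0}$ expressed in words. Where you go further is in the converse. The paper treats the biconditional as trivially symmetric, whereas you correctly observe that the ``only if'' direction needs the centered design $\mathbf{C}\mathbf{X}$ to have column space equal to $\{\bm{1}\}^{\perp}$ (equivalently, $[\mathbf{X}\mid\bm{1}]$ has full row rank $n$). This hypothesis is implicit in the paper's framing---just before the claim they write that estimates are ``unique modulo the span of a vector of ones''---but it is never isolated as an assumption, and without it the kernel of $\bg\mapsto\mathbf{X}^{\T}\mathbf{C}\bg$ can strictly contain $\mathrm{span}\{\bm{1}\}$, as you note. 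Your treatment is therefore more careful than the paper's one-line justification, and your remark that the rank condition is generically met when $p\gg n$ matches the paper's intended regimes.
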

\noindent The proof of this claim is trivial and follows directly from the covariance being invariant with respect to changes in location. Other proofs connecting this effect size to classic statistical measures can be found in the Supplementary Material.

\subsection{Closed-Form Relative Centrality Measures for Bayesian Neural Networks} \label{subsec:cf-for-cent-measures}

Under our modeling assumptions, the posterior distribution of $\tbbeta$ is multivariate normal with an empirical mean vector $\bmu$ and positive semi-definite covariance/precision matrix $\bOmega = \bLambda^{-1}$. Given these values, we may partition conformably for the $j$-th input variable such that
\begin{align*}
\bmu = \beginmat{cc} \mu_j \\ \bmu_{-j} \endmat, \quad \quad \bm{\Omega} = \beginmat{cc} \omega_{j} & \bm{\omega}^{\T}_{-j}\\ \bm{\omega}_{-j} & \bm{\Omega}_{-j}\endmat, \quad \quad \bm{\Lambda} = \beginmat{cc} \lambda_{j} & \bm{\lambda}^{\T}_{-j}\\ \bm{\lambda}_{-j} & \bm{\Lambda}_{-j}\endmat.
\end{align*}
With these normality assumptions, after conditioning on $\tbeta_j = 0$, Equation \eqref{eq:KLD} for the RATE criterion has the following closed-form solution
\begin{equation} \label{eq:kld-j}
\text{KLD}_j = \frac{1}{2}\bigg[\tr(\bOmega_{-j}\bLambda_{-j})-\log\left|\bOmega_{-j}\bLambda_{-j}\right|-(p-1)+\delta_j\mu_j^2\bigg] \,,
\end{equation}
where $\tr(\cdot)$ is the matrix trace function, and $\delta_j = \bm{\lambda}^{\T}_{-j}\bm{\Lambda}_{-j}^{-1}\bm{\lambda}_{-j}$ and characterizes the implied linear rate of change of information when the effect of any predictor is absent --- thus, providing a natural (non-negative) numerical summary of the role of each $\widetilde{\beta}_j$ plays in defining the full joint posterior distribution. In a dataset with a reasonably large number of features, the term $\tr(\bOmega_{-j}\bLambda_{-j})-\log\left|\bOmega_{-j}\bLambda_{-j}\right|-(p-1)$ remains relatively equal for each input variable and, thus, makes a negligible contribution to when determining the variable importance \citep{crawford2018variable}. Therefore, in practice, we compute RATE measures using
\begin{equation} \label{eq:kld-j-approx}
    \text{KLD}_j \approx \delta_j\mu_j^2/2 \,.
\end{equation}
Note that the scalability of the RATE calculation in Equation \eq{eq:kld-j-approx} (which includes a feature's posterior mean and the joint covariance matrix) is $\mathcal{O}(pn^2 + p^2n + p^4)$ for $n$ observations and $p$ variables. Hence, the leading order term is $\mathcal{O}(p^4)$ which is driven by $p$ independent $\mathcal{O}(p^3)$ operations of solving the $p-1$ dimensional linear systems $\delta_j$ for $j=1,\ldots,p$. This restricts the current implementation of RATE to datasets of size $n \lessapprox 10^5$ and $p \lessapprox 10^4$ if the system is solved. Fortunately, the matrix $\bm{\Lambda}_{-j}^{-1}$ differs by only a single row and column between consecutive values of $j$, meaning that low-rank updates can be used to solve $\delta_j = \bm{\lambda}^{\T}_{-j}\bm{\Lambda}_{-j}^{-1}\bm{\lambda}_{-j}$ in $\mathcal{O}(p^2)$ time using the Sherman-Morrison formula (for a single $j$-th index) \citep{hager1989updating}. Therefore, the overall run complexity of the RATE algorithm can be further reduced to $\mathcal{O}(p^3)$, but we leave this potentially more scalable implementation for future work. Our results in this study indicate that the current implementation of RATE is already faster than comparable mimic models once time for cross-validation is also considered (see Section \ref{sec:results}).

\subsection{Relationship between Relative Centrality and Mutual Information}

To build further intuition about centrality measures, we establish a formal connection between the RATE measure and mutual information (MI). Notice, that by simplifying the definition of mutual information, we have the following
\begin{equation}
\begin{aligned}
\text{MI}(\widetilde{\bm{\beta}}_{-j},\widetilde{\bm{\beta}}_{j})&= \iint p(\widetilde{\bm{\beta}}_{-j},\widetilde{\bm{\beta}}_{j})\,\log\left(\frac{p(\widetilde{\bm{\beta}}_{-j},\widetilde{\bm{\beta}}_{j})}{p(\widetilde{\bm{\beta}}_{-j})p(\widetilde{\bm{\beta}}_{j})}\right)\diff\widetilde{\bm{\beta}}_{-j}\diff\widetilde{\bm{\beta}}_{j}\\
    &=\iint p(\widetilde{\bm{\beta}}_{j})p(\widetilde{\bm{\beta}}_{-j}\cond\widetilde{\bm{\beta}}_{j})\,\log\left(\frac{p(\widetilde{\bm{\beta}}_{-j}\cond \widetilde{\bm{\beta}}_{j})}{p(\widetilde{\bm{\beta}}_{-j})}\right)\diff\widetilde{\bm{\beta}}_{-j}\diff\widetilde{\bm{\beta}}_{j}\\
    &=\int p(\widetilde{\bm{\beta}}_{j})\,\text{KL}\bigg(p(\widetilde{\bm{\beta}}_{-j}\cond \widetilde{\beta}_j)\,\|\,p(\widetilde{\bm{\beta}}_{-j}) \bigg)\diff\widetilde{\bm{\beta}}_{j}\;.
\end{aligned}
\end{equation}
While the RATE criterion compares the marginal distribution $p(\widetilde{\bm{\beta}}_{-j})$ to the conditional distribution $p(\widetilde{\bm{\beta}}_{-j}\cond \widetilde{\beta}_j=0)$ with the effect of the $j$-th predictor being set to zero, the mutual information criterion compares $p(\widetilde{\bm{\beta}}_{-j})$ to the conditional distribution $p(\widetilde{\bm{\beta}}_{-j}\cond \widetilde{\beta}_j)$ for all the possible values of $\widetilde{\beta}_j$. Whenever the effect size analogue follows a normal distribution $\widetilde{\bm{\beta}}\sim\mathcal{N}(\bm{\mu},\bm{\Omega})$, the unnormalised RATE criterion for the $j$-th variable is given by Equation \eqref{eq:kld-j}. In the same setting, the mutual information can also be computed analytically as
\begin{align} \label{eq:mutinf}
\text{MI}(\widetilde{\bm{\beta}}_{-j},\widetilde{\bm{\beta}}_{j})=\frac{1}{2}\;\log\left(\delta_j|\bm{\Omega}_{-j}||\bm{\Omega}|^{-1}\right) \,,
\end{align} 
where the mutual information criterion is equal to $0$ if and only if $\widetilde{\bm{\beta}}_{-j}$ and $\widetilde{\bm{\beta}}_{j}$ are independent. To see the difference between the two information theoretic measures in Equations \eqref{eq:kld-j} and \eqref{eq:mutinf}, notice that $\text{MI}(\widetilde{\bm{\beta}}_{-j},\widetilde{\bm{\beta}}_{j})$ only depends on the values of the covariance/precision matrix $\bm{\Omega} = \bLambda^{-1}$. This is in contrast to the RATE criterion which also takes the posterior mean (or marginal effect) of input features $\bm{\mu}$ into account when determining variable importance. Therefore, if a feature is only marginally associated with an outcome but does not have any significant covarying relationships with other variables in the data, RATE will still identify this feature as being an important predictor.


\section{Results} \label{sec:results}

In this section, we first illustrate the performance of our interpretable Bayesian neural network framework via simulation studies in both the regression and binary classification settings. Here, the goal is to show how determining variable importance for a trained neural network with the RATE measure compares with commonly used mimic modeling techniques in the field. Finally, we examine the potential of our approach in two real datasets from computer vision and natural language processing, respectively.

\subsection{Simulation Studies}\label{subsec:sim-studies}

For all assessments with synthetic data, we consider a simulation design that is often used to explore the power statistical methods \citep{crawford2018bayesian,crawford2018variable}. Let $\bX$ denote a design matrix of $n$ independent observations with $p = 100$ predictor variables. We consider that only a subset of $30$ randomly chosen variables are causal. This subset of causal features is divided into three groups of equal size: features in the two first groups ($\mathcal{C}_1$ and $\mathcal{C}_2$) are involved in pairwise interactions, while features in the third group ($\mathcal{C}_3$) only have additive effects. More precisely, we consider the following generalized linear model for every observation:
\begin{align*}
\by = \sigma(\f), \quad \quad \f = \sum_{j\in\mathcal{C}}\bx_j\beta_j + \sum_{j\in\mathcal{C}_1}\sum_{k\in\mathcal{C}_2}(\bx_{j}\odot\bx_{k})\alpha_{jk} + \bvarepsilon, \quad \quad \bvarepsilon\sim \N(\bm{0},\tau^2\bI) \,,
\end{align*}
where $\sigma(\cdot)$ is a link function, $\bx_{j}\odot\bx_{k}$ denotes an element-wise multiplication between the $j$-th and $k$-th feature vectors, and $\mathcal{C}=\mathcal{C}_1\cup\mathcal{C}_2\cup\mathcal{C}_3$ denotes the union of indices for all causal features. Note that  we only allow interactions to occur between groups, so that features in the first group interact with features in the second group, but do not interact with variables within their own group. The additive and interaction effects are independently drawn from normal distributions: $\bbeta \sim \N(\bm{0},\sigma^2_\beta\bI)$ and $\balpha\sim\N(\bm{0},\sigma^2_\alpha\bI)$, respectively. The variance components $\sigma^2_\beta$ and $\sigma^2_\alpha$ are scaled such that the additive and interaction effects explain 30\% of the total variance in the response variable, while the remaining 40\% is explained by the error term. In other words, 
\begin{align*}
\V\left[\sum_{j\in\mathcal{C}}\bx_j\beta_j\right]= \V\left[\sum_{j\in\mathcal{C}_1}\sum_{k\in\mathcal{C}_2}(\bx_{j}\odot\bx_{k})\alpha_{jk}\right]= 0.3\quad \text{ and }\quad \V[\bvarepsilon]=\tau^2=0.4 .
\end{align*}
We consider a few simulation scenarios while varying the sample size $n\in\{3\times10^{3},\;1\times10^{4},\;3\times10^{4},\;1\times10^{5}\}$. We also consider two link functions: \textit{(i)} the identify function, so that $\by = \f$, for simulating continuous response variables; and \textit{(ii)} a combination of the sigmoid function and thresholding, $\by = \mathbbm{1}[[1+\exp\{-\f\}]^{-1} \ge 0.5]$, for generating binary classification labels. Additional noise was also added by flipping 10\% of the labels in the binary classification setting.

For the simulations considered here, we use 70\% of the observations samples to train a Bayesian neural network with two (deterministic) hidden layers of sizes 32 and 16 units respectively and rectified linear unit (ReLU) activations. We assume a mean-field diagonal Gaussian approximating distribution over the final layer weights and train the network using variational inference. We then evaluate variable importance for the Bayesian neural network using RATE and two other \textit{post-hoc} measures:
\begin{enumerate}
    \item The Mean Decrease Impurity variable importance of a random forest (RF) mimic model \citep{breiman2001random} trained to mimic the mean soft predictions on the training data;
    \item The Mean Decrease Impurity variable importance of a gradient boosting machine (GBM) mimic model \citep{friedman2001greedy} trained to mimic the mean soft predictions on the training data.
\end{enumerate}
Note that, for simulations where the task is binary classification, RATE values are computed using the posterior of the pre-sigmoid function values $\f$ and the RF mimic model is trained to predict the mean latent class probabilities $\Pr[y_i = 1]$, for $1\leq i\leq n$ over Monte Carlo samples from the predictive posterior. More details on the Bayesian neural network training procedure and mimic model cross-validation can be found in the Supplementary Material. 

We evaluate each method's ability to effectively prioritize the causal features in 25 different simulated datasets for each simulation set-up. The criteria we use compares receiver operating characteristic (ROC) curves where the false positive rate (FPR) is plotted against the rate at which true variables are identified by each method (TPR) (Figures \ref{Fig2} and \ref{Fig3}). This is further quantified by comparing the area under curves (AUC) across experiments where a higher value denotes better accuracy in prioritizing important input features (Tables \ref{Tab1} and \ref{Tab2}). For the smallest sample size ($n=3\times10^3$), RATE and the mimic models have similar performances; although in the classification scenario (Figure~\ref{Fig3}a), this is close to random guessing as there was insufficient data for the Bayesian neural network to exhibit good predictive performance. For the moderately larger sample sizes (i.e., $n \ge 1\times10^4$), both RATE and the GBM mimic begin to separate as the best methods with consistently higher median AUCs than the RF mimic model. The random forest also exhibits far more variability in its performance. While the GBM model is able to match RATE, its computational time is two orders of magnitude larger and also requires hand-tuning for the cross-validation procedure (see Figure~\ref{Fig4}). A notable and practical advantage of RATE is that it does not require tuning any hyper-parameters. 

\begin{figure}[hbt!]
\centering
\subfigure[Sample Size: $n = 3\times 10^{3}$]{
\includegraphics[width = 0.45\textwidth]{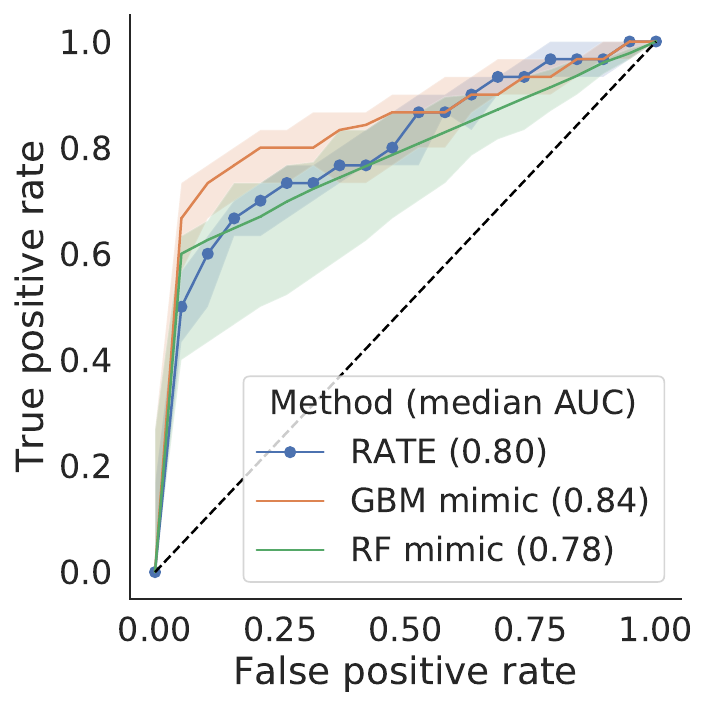}
}
\subfigure[Sample Size: $n = 1\times 10^{4}$]{
\includegraphics[width = 0.45\textwidth]{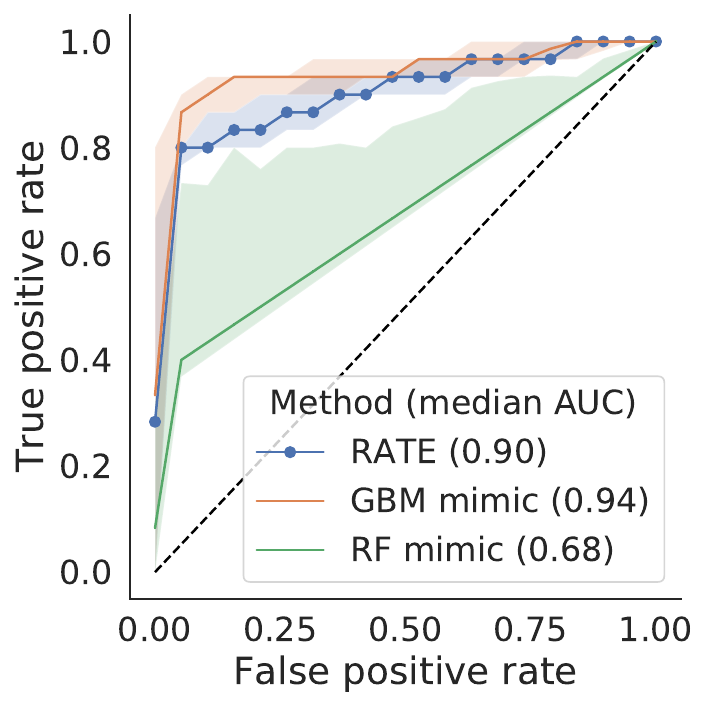}
}
\subfigure[Sample Size: $n = 3\times 10^{4}$]{
\includegraphics[width = 0.45\textwidth]{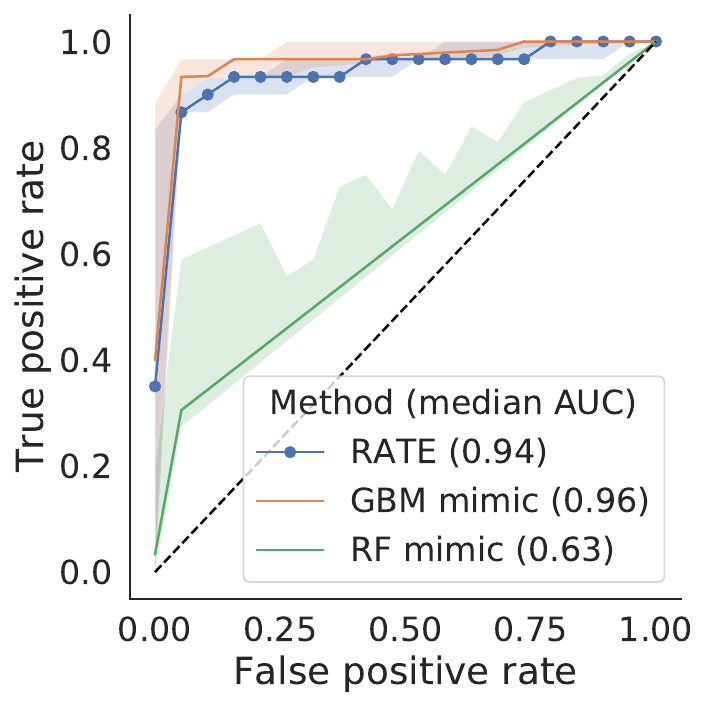}
}
\subfigure[Sample Size: $n = 1\times 10^{5}$]{
\includegraphics[width = 0.45\textwidth]{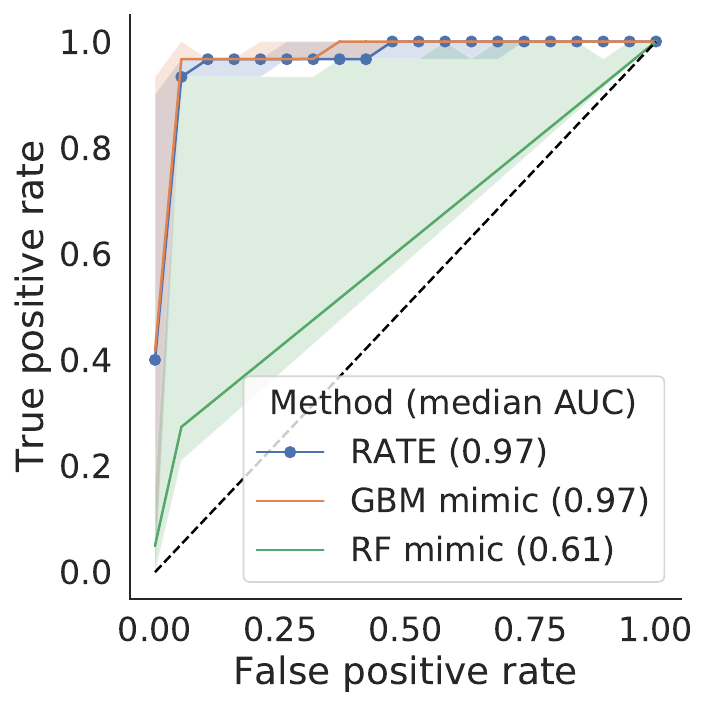}
}
\caption{Receiver operating characteristic (ROC) curves assessing the effectiveness of RATE (blue) as a variable importance measure in simulations with continuous responses (i.e., regression setting). Here, we compare the performance of random forest (RF; green) and gradient boosting machine (GBM; orange) mimic models as a baseline. Depicted are the median curves taken over 25 repeated experiments, where causal features and their corresponding effect sizes are resampled for each repeat. Area under the curve (AUC) is reported to facilitate comparisons. The shaded areas indicate the 95\% confidence intervals for each curve using 1,000 bootstrapped curves.}
\label{Fig2}
\end{figure}

\begin{figure}[hbt!]
\centering
\subfigure[Sample Size: $n = 3\times 10^{3}$]{
\includegraphics[width = 0.45\textwidth]{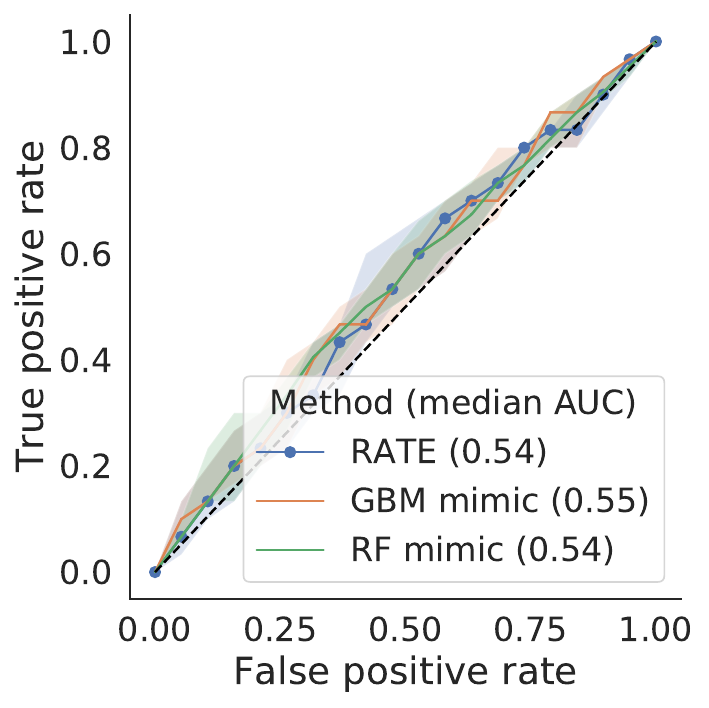}
}
\subfigure[Sample Size: $n = 1\times 10^{4}$]{
\includegraphics[width = 0.45\textwidth]{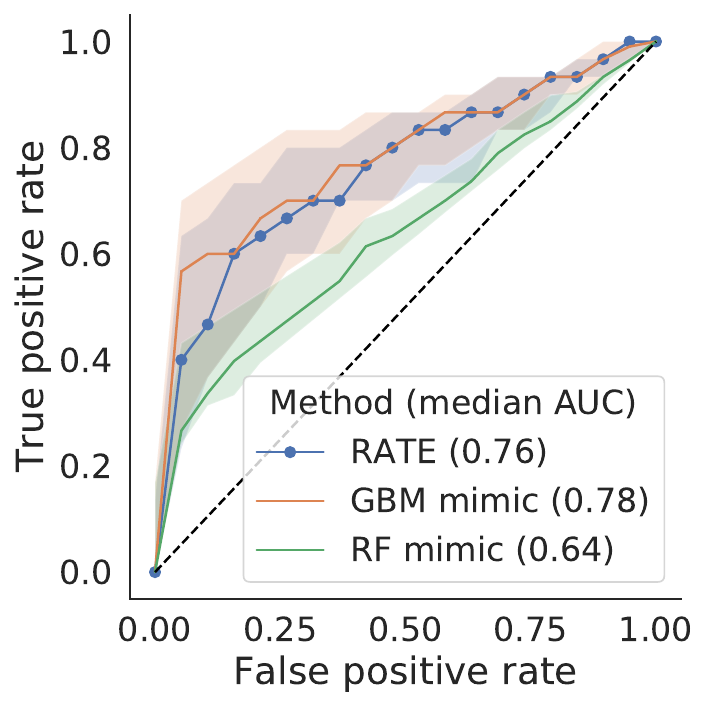}
}
\subfigure[Sample Size: $n = 3\times 10^{4}$]{
\includegraphics[width = 0.45\textwidth]{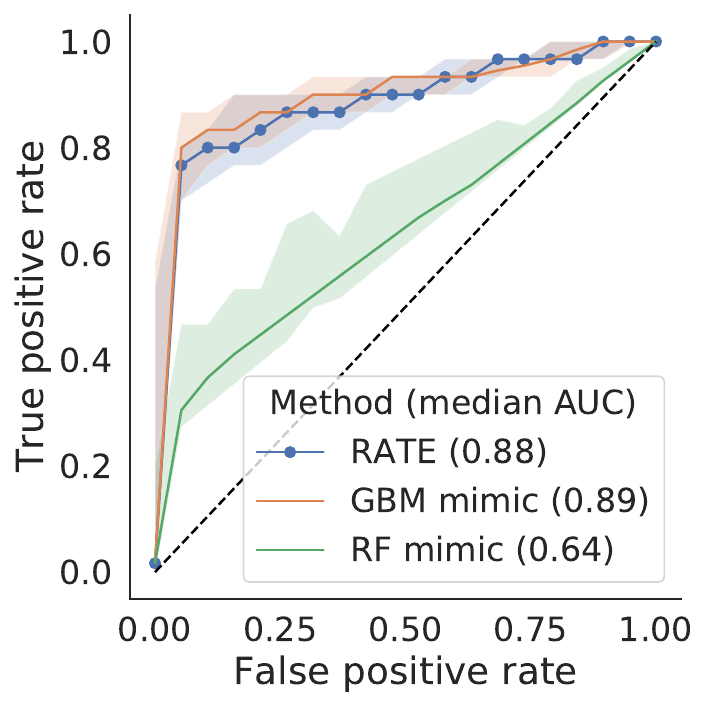}
}
\subfigure[Sample Size: $n = 1\times 10^{5}$]{
\includegraphics[width = 0.45\textwidth]{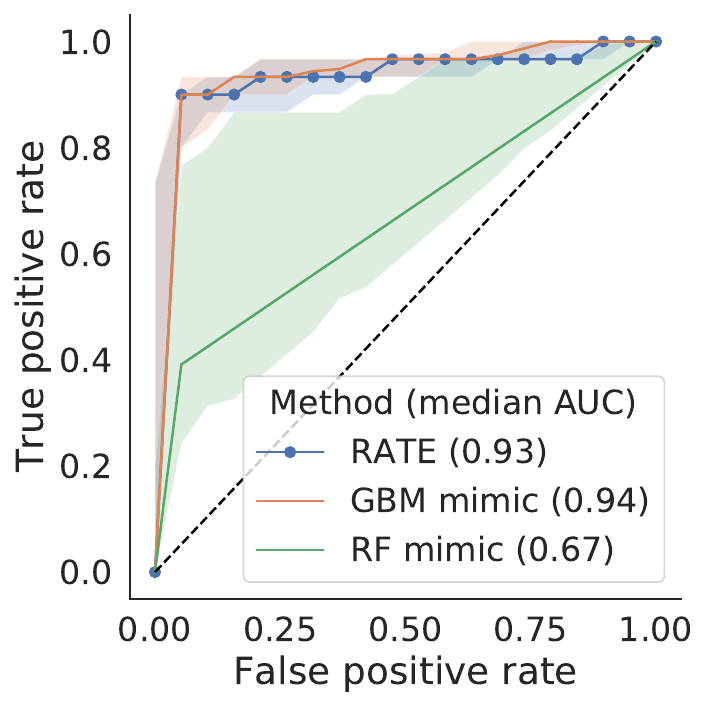}
}
\caption{Receiver operating characteristic (ROC) curves assessing the effectiveness of RATE (blue) as a variable importance measure in simulations with binary labels (i.e., classification setting). Here, we compare the performance of random forest (RF; green) and gradient boosting machine (GBM; orange) mimic models as a baseline. Depicted are the median curves taken over 25 repeated experiments, where causal features and their corresponding effect sizes are resampled for each repeat. Area under the curve (AUC) is reported to facilitate comparisons. The shaded areas indicate the 95\% confidence intervals for each curve using 1,000 bootstrapped curves.}
\label{Fig3}
\end{figure}

\begin{table}[ht]
\caption{Median area under the curves (AUC) assessing the effectiveness of RATE as a variable importance measure in simulations with continuous responses (i.e., regression setting). Here, we compare the performance of random forest (RF) and gradient boosting machine (GBM) mimic models as a baseline. Values in the parentheses are the 95\% confidence intervals for each curve taken over 25 repeated experiments. These AUC values correspond to the curves in Figure \ref{Fig2}.}
\begin{center}
\begin{small}
\begin{sc}
\begin{tabular}{lcccc}
\toprule
Method & $n=3,000$ & $n=10,000$ & $n=30,000$ & $n=100,000$ \\
 \midrule
RATE & 0.80 (0.75-0.83) & 0.90 (0.87-0.93) & 0.94 (0.92-0.97) & 0.97 (0.94-0.99) \\
GBM mimic & 0.84 (0.78-0.87) & 0.94 (0.90-0.96) & 0.96 (0.94-0.99) & 0.97 (0.96-0.99) \\
RF mimic & 0.78 (0.69-0.82) & 0.68 (0.66-0.84) & 0.63 (0.61-0.74) & 0.61 (0.58-0.93) \\
\bottomrule
\end{tabular}
\end{sc}
\end{small}
\end{center}
\label{Tab1}
\end{table}

\begin{table}[hbt!]
\caption{Median area under the curves (AUC) assessing the effectiveness of RATE as a variable importance measure in simulations with binary labels (i.e., classification setting). Here, we compare the performance of random forest (RF) and gradient boosting machine (GBM) mimic models as a baseline. Values in the parentheses are the 95\% confidence intervals for each curve taken over 25 repeated experiments. These AUC values correspond to the curves in Figure \ref{Fig3}.}
\begin{center}
\begin{small}
\begin{sc}
\begin{tabular}{lcccc}
\toprule
Method & $n=3,000$ & $n=10,000$ & $n=30,000$ & $n=100,000$ \\
 \midrule
RATE & 0.53 (0.50-0.58) & 0.77 (0.68-0.85) & 0.89 (0.84-0.93) & 0.93 (0.91-0.96) \\
GBM mimic & 0.53 (0.50-0.58) & 0.79 (0.68-0.87) & 0.90 (0.87-0.94) & 0.94 (0.93-0.98) \\
RF mimic & 0.53 (0.49-0.57) & 0.71 (0.66-0.80) & 0.69 (0.65-0.75) & 0.75 (0.70-0.87) \\
\bottomrule
\end{tabular}
\end{sc}
\end{small}
\end{center}
\label{Tab2}
\end{table}

\begin{figure}[hbt!]
\centering
\subfigure[Regression]{
\includegraphics[width = 0.47\textwidth]{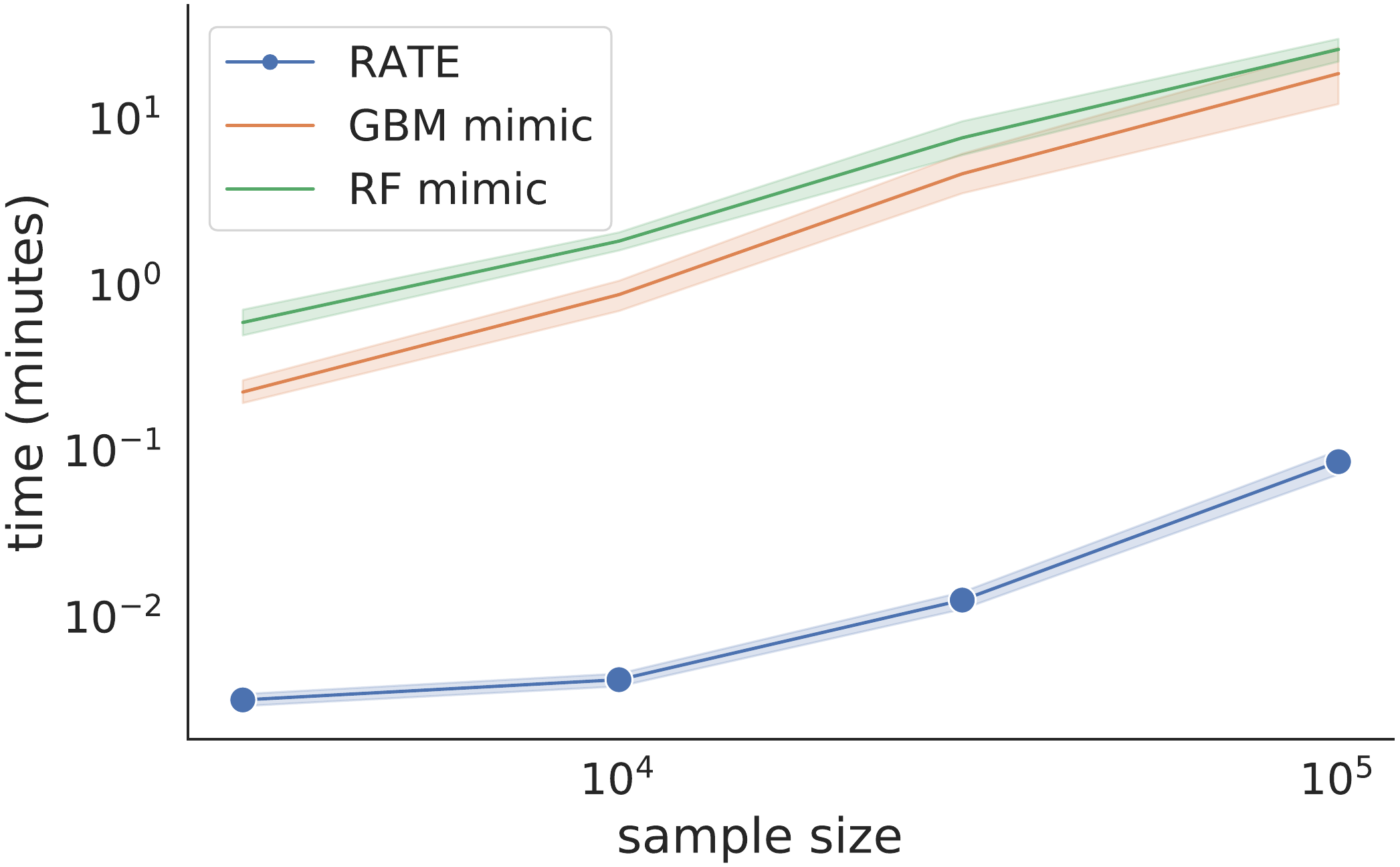}
}
\subfigure[Classification]{
\includegraphics[width = 0.47\textwidth]{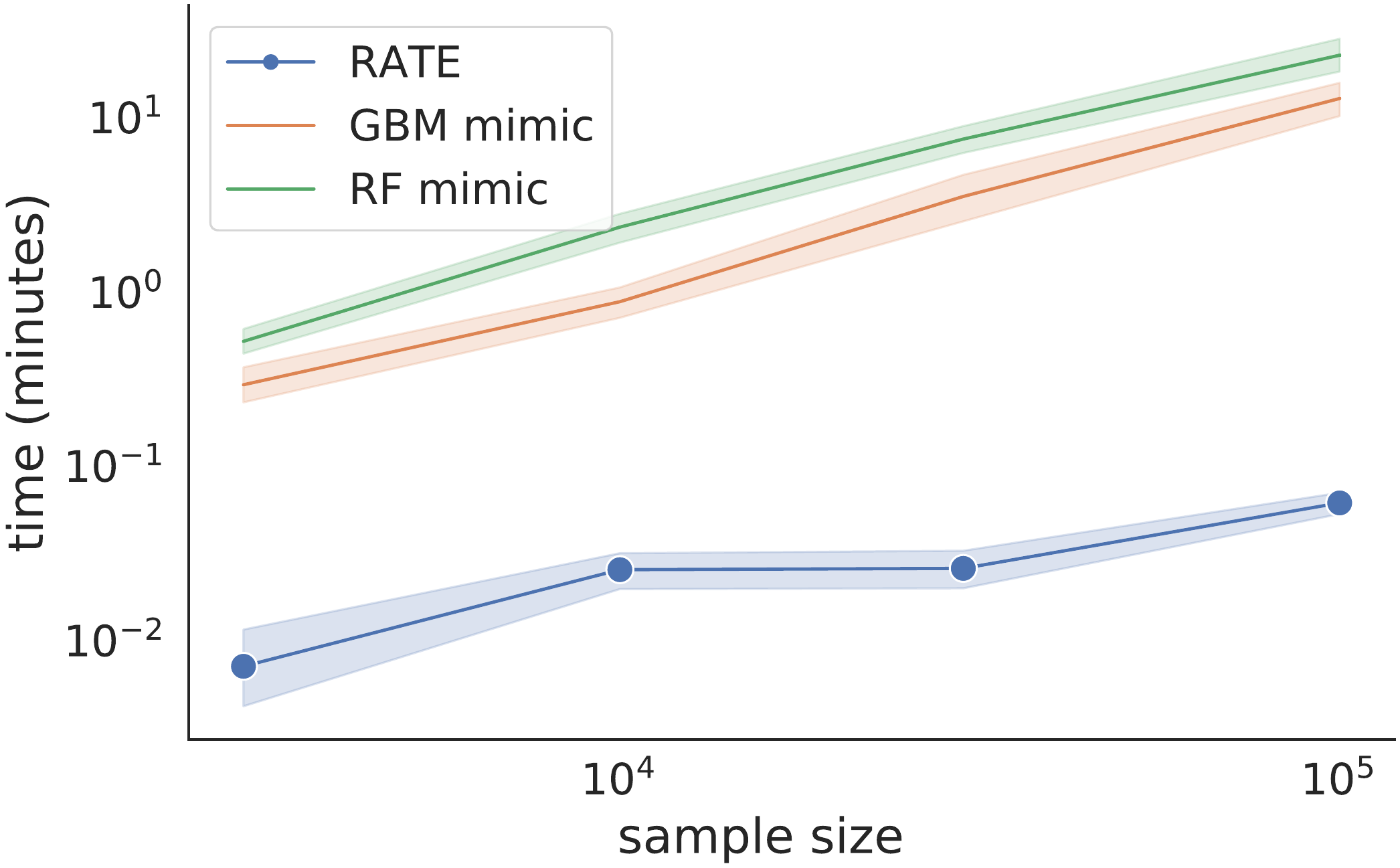}
}
\caption{The run times (in minutes) for RATE (blue) and the competing mimic models in the simulation studies. Results are shown in both the (a) regression and (b) classification settings. All computations were performed using 32 cores. The random forest (RF; green) and gradient boosting machine (GBM; orange) mimic models were selected using 5-fold cross-validation and a random hyper-parameter search over 30 different models. Shaded regions depict standard errors across the simulated replicates.}
\label{Fig4}
\end{figure}


\subsection{Binary Image Classification using MNIST} \label{subsec:results-mnist}

Here, we demonstrate the utility of RATE in two binary classification tasks from the MNIST dataset \citep{lecun1998mnist}. The Bayesian neural network used in this analysis contains a single convolution layer, followed by two fully-connected layers (see Supplementary Material for further details). We would like to re-emphasize that the focus here is on \textit{post-hoc} interpretations of a trained network, so the network architecture and training procedures were not optimized for predictive performance. As in the previous section, we compute variable (pixel) importance using \textit{(i)} RATE values, \textit{(ii)} a random forest (RF) mimic model, and \textit{(iii)} a gradient boosting machine (GBM) mimic model. In addition, we also included some local interpretability methods (collectively known as saliency maps) that attribute pixel importance using the gradient of the network output with respect to each pixel. The drawbacks of saliency-based methods have been well-documented \citep{adebayo2018sanity,kindermans2019reliability,ghorbani2019interpretation}, but they are included here due to their popularity in computer vision. See \cite{ancona2017towards} for an analysis and comparison of these saliency methods.

A direct comparison to RATE is not possible as saliency maps are local methods and are therefore used to explain a network's prediction on a single image. However, we can assign global importance to a pixel by taking the mean absolute value of its local importance over a set of observations. For example, the simplest saliency map attributes the partial derivative $\partial y_i/\partial x_{ij}$ as the importance of the $j$-th pixel in the $i$-th image. We then assign global importance using $\sum_{i=1}^n \left| \partial \y/\partial \x_j \right|/n$. In addition to this ``vanilla'' gradient, we also compare our approach to the \textit{(i)} Integrated Gradient \citep{sundararajan2017axiomatic}, \textit{(ii)} Gradient$\odot$Input (where again $\odot$ denotes element-wise product) \citep{shrikumar2016not}, and \textit{(iii)} $\varepsilon$-Layer-wise Relevance Propagation ($\varepsilon$-LRP) \citep{bach2015pixel}.

In both sets of analyses, we show that RATE is able to at least match the best performing mimic models, which are more established approaches to global interpretability (compared to aggregating saliency maps). To the best of our knowledge, there have been no previous studies on the use of aggregated saliency methods for global interpretations of a neural network.

\paragraph{Zeros vs.~Ones.} For the first task, we take just the zeros and ones from MNIST, resulting in a dataset of $n =$ 12,665 training images, each with $p =$ 324 pixels (after cropping). Figure~\ref{Fig5} shows how the pixels are ranked by each method (diagonal plots in Figure~\ref{Fig5}), while also comparing the rankings of the pixels according to each method (lower diagonal plots in Figure~\ref{Fig5}). The pixels identified by RATE are consistent with human intuition: when distinguishing between zero and one, the most important pixels are in the center (where the vertical line of a one would appear) and in a ring (corresponding to the shape of zero). The RF mimic model produces a similar visualization and ranking (Spearman's $\rho=0.79$), with especially strong agreement between the highly-ranked pixels. The results from mean absolute integrated gradient are also similar to RATE (Spearman's $\rho=0.63$), but are less defined. The GBM mimic highlights very few pixels, while the mean absolute gradient produces poorly-defined visualization.

While our results suggest a plausible set of important pixels under visual inspection, a natural followup analysis is to quantitatively assess these findings. As this is a real dataset, we do not have access to the ground truth. However, we can investigate how important each pixel is to a given network when it makes an out-of-sample prediction. To do so, we calculate prediction accuracy as certain pixels in the test images are shuffled, thus de-correlating those pixels from the labels. Figure~\ref{Fig6} shows the test set accuracy as progressively larger subsets of pixels are shuffled, where the pixels are shuffled in the order of their ranking according to each method. A ``good'' variable importance method ranks the pixels in such a way that the test accuracy decreases quickly. For this simple problem, the saliency methods (i.e., the mean absolute gradient, integrated gradient, gradient$\odot$input , and $\varepsilon$-LRP) perform best and lead to the steepest decrease in test accuracy. RATE and the RF mimic model lead to similar (but less steep decreases in test accuracy), while the GBM mimic performs the worst of all the methods. All methods we consider are better than shuffling the pixels at random.

\begin{figure}[hbt!]
\centering
\includegraphics[width = 0.8\textwidth]{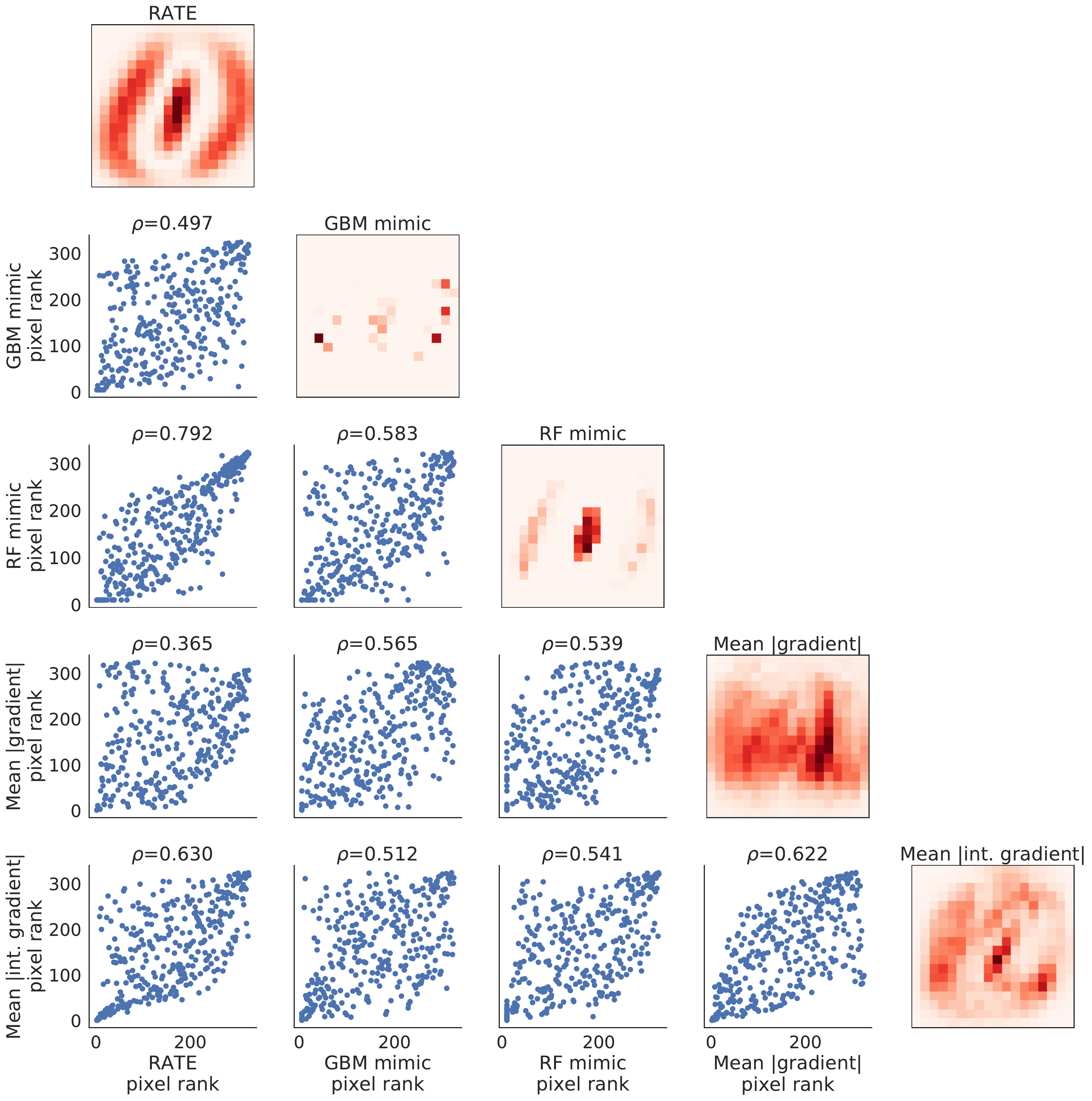}
\caption{Distinguishing zeros and ones in MNIST. Pixel importance (darker pixels are more important) according to \textit{(i)} RATE, \textit{(ii)} a random forest (RF) mimic model, \textit{(iii)} a gradient boosting machine (GBM) mimic model, \textit{(iv)} mean absolute saliency map, and \textit{(v)} mean absolute integrated gradient. Diagonal plots show the importance of each pixel and off-diagonal plots compare the pixel rankings according to the different methods using scatter plots and Spearman's correlation coefficient $\rho$.  Two saliency methods (mean absolute gradient$\odot$input and mean absolute $\varepsilon$-LRP) are omitted as their pixel importances are almost identical to the mean absolute integrated gradient (Spearman's $\rho\ge0.99$). The pixel visualizations for these methods can be found in the Supplementary Material.}
\label{Fig5}
\end{figure}

\begin{figure}[hbt!]
\centering
\includegraphics[width = 0.8\textwidth]{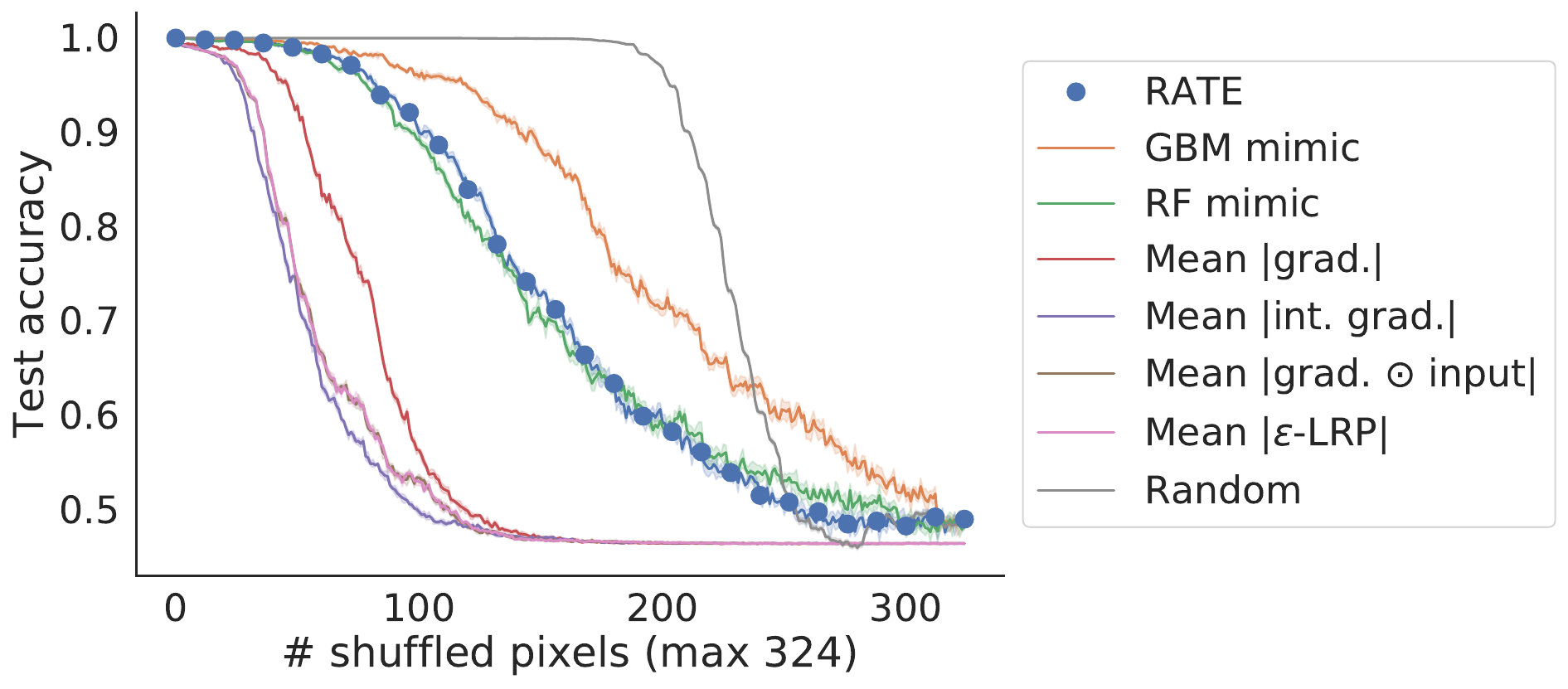}
\caption{Distinguishing zeros and ones in MNIST. The decrease in test accuracy as pixels are shuffled according to their importance (the most highly ranked pixels are shuffled first). A ``good'' variable importance method ranks the pixels in such a way that the test accuracy decreases quickly. Shuffling the pixels at random is also included as a baseline for comparison.}
\label{Fig6}
\end{figure}

\paragraph{Evens vs.~Odds.} We now present additional results for the more difficult binary classification problem of classifying odd and even digits. For this analysis, we used the full MNIST dataset of 60,000 training and 10,000 test images. Once again, each image had $p=$ 324 pixels (after cropping) and we compare RATE to the same six competing methods (see Figure~\ref{Fig7}). As this is a more complex problem the quality of pixel importance is more difficult to evaluate visually. However, RATE produces the most well-defined pixel importance as the mimic models place high importance on a very small number of pixels and the saliency methods placing high importance on a large number of pixels. In the previous problem of analyzing zeros and ones (in which the two classes are linearly separable), the saliency methods were clearly the best performers; however in this setting, RATE, the mean absolute gradient, and the GBM mimic model perform the best (Figure~\ref{Fig8}). The remaining methods (i.e., the RF mimic, mean absolute integrated gradient, gradient$\odot$input and $\varepsilon$-LRP) all perform very similarly. Once again, all the methods decrease the test accuracy more steeply than shuffling pixels at random.

\begin{figure}[hbt!]
\centering
\includegraphics[width = 0.8\textwidth]{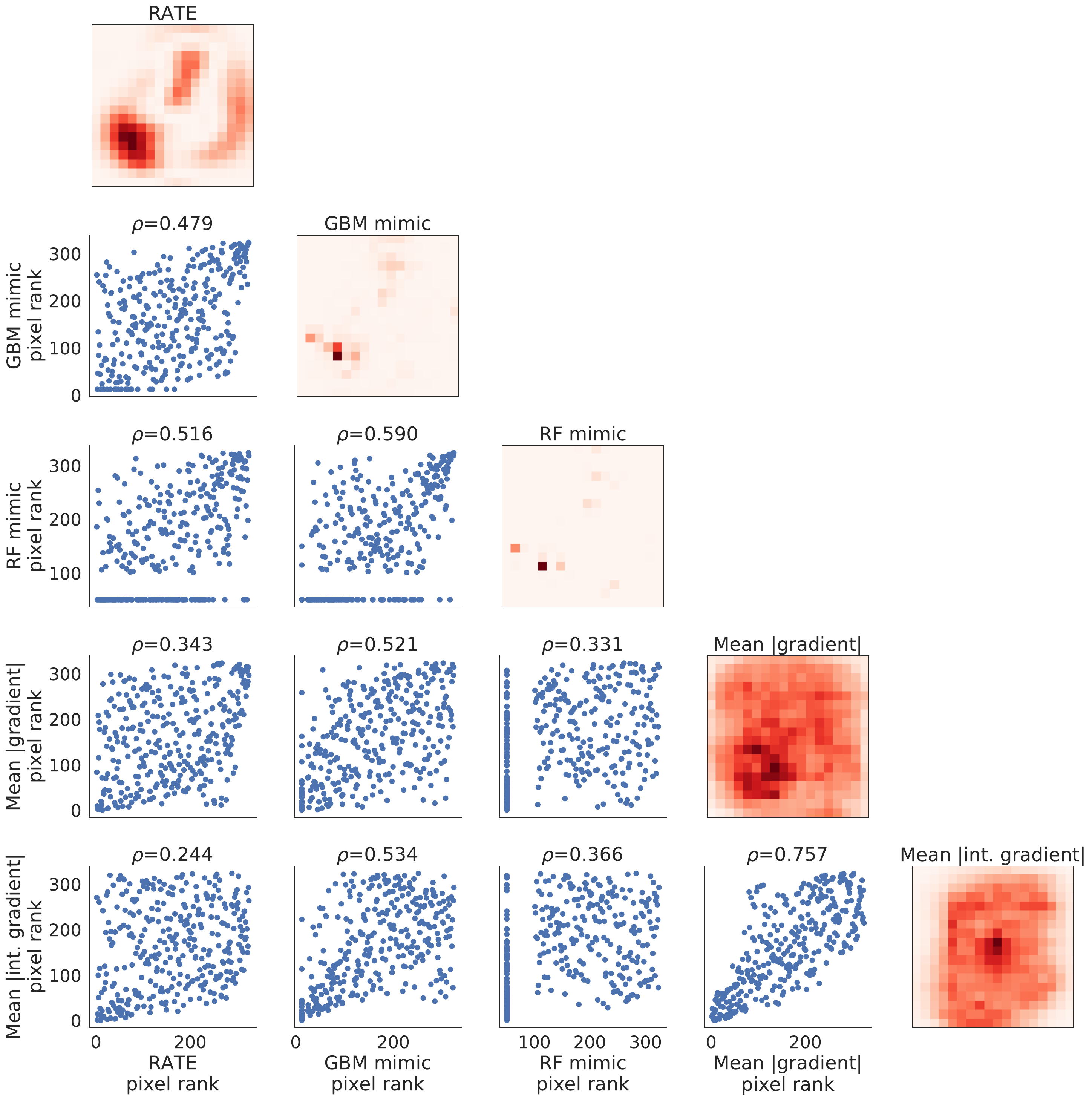}
\caption{Distinguishing evens and odds in MNIST. Pixel importance (darker pixels are more important) according to \textit{(i)} RATE, \textit{(ii)} a random forest (RF) mimic model, \textit{(iii)} a gradient boosting machine (GBM) mimic model, \textit{(iv)} mean absolute saliency map, and \textit{(v)} mean absolute integrated gradient. Diagonal plots show the importance of each pixel and off-diagonal plots compare the pixel rankings according to the different methods using scatter plots and Spearman's correlation coefficient $\rho$.  Two saliency methods (mean absolute gradient$\odot$input and mean absolute $\varepsilon$-LRP) are omitted as their pixel importances are almost identical to the mean absolute integrated gradient (Spearman's $\rho\ge0.99$). The pixel visualizations for these methods can be found in the Supplementary Material.}
\label{Fig7}
\end{figure}

\begin{figure}[hbt!]
\centering
\includegraphics[width = 0.8\textwidth]{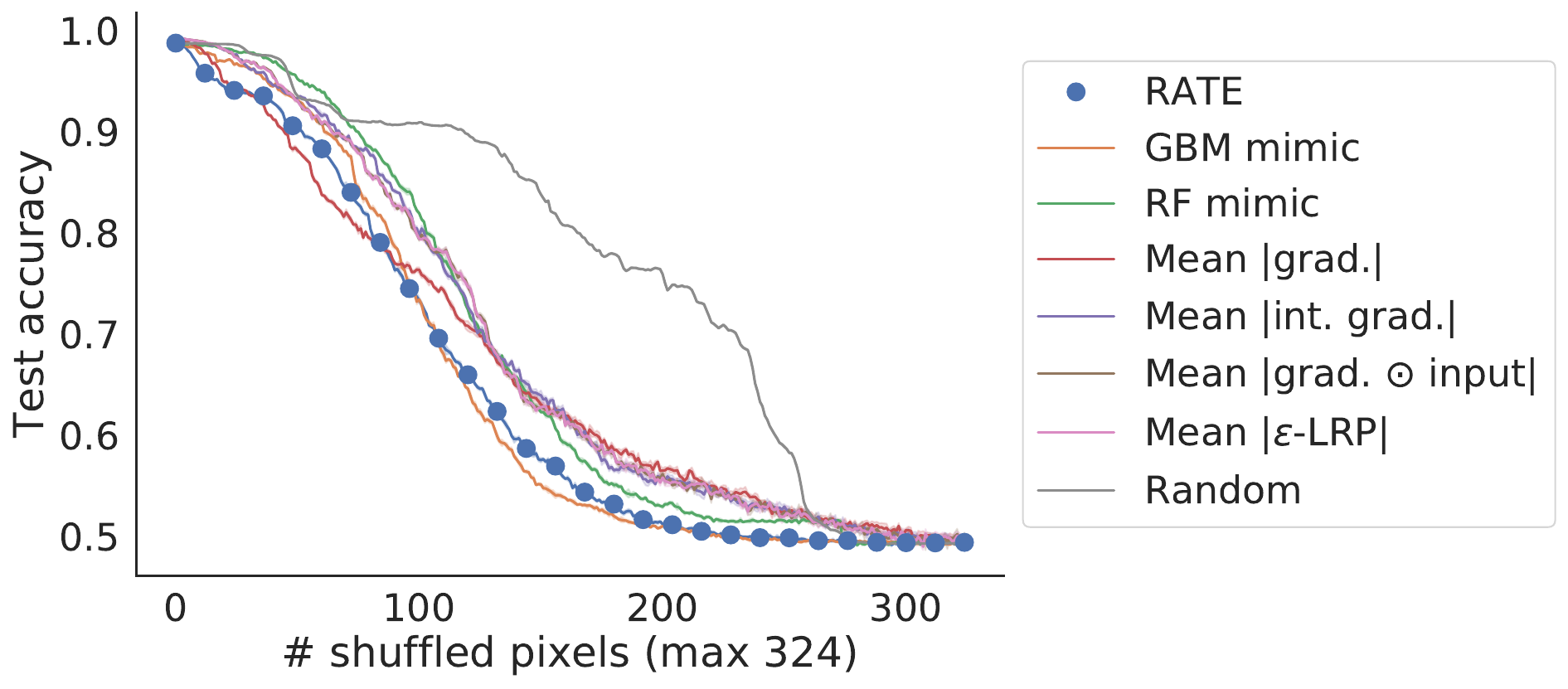}
\caption{Distinguishing evens and odds in MNIST. The decrease in test accuracy as pixels are shuffled according to their importance (the most highly ranked pixels are shuffled first). A ``good'' variable importance method ranks the pixels in such a way that the test accuracy decreases quickly. Shuffling the pixels at random is also included as a baseline for comparison.}
\label{Fig8}
\end{figure}


\subsection{Sentiment Analysis using the Large Movie Review Dataset}

We now present an example of RATE for binary classification in natural language processing. Here, we use the Large Movie Review (IMDB) Dataset, which consists of 50,000 reviews labeled as having positive or negative sentiment \citep{maas2011learning}. The reviews were encoded using term frequency-inverse document frequency (TF-IDF), initially retaining the 1,020 most commonly occurring words in the corpus. For our analysis, we exclude the 20 most common words, resulting in a final dataset with 1,000 words. We split the data into 70\% training and 30\% test sets, and then train a Bayesian neural network with 3 fully-connected, 128-unit hidden layers using the Adam optimizer with a learning rate $1\times 10^{-3}$ \citep{kingma2014adam} (Supplementary Material).

Variable importance for the Bayesian neural network was then calculated using RATE, as well as the random forest and gradient boosting machine mimic models as described in the previous sections. Since we consider a bag-of-words encoding, each input variable in the network corresponds to a single word that has a fixed meaning across all reviews in the data. We make this choice because in alternative encodings (based on word embeddings) the interpretation of variables would correspond to a particular position within a sequence rather than a particular word. RATE is not an appropriate variable importance method for such encodings as it relies on variables having a fixed meaning across examples.

Figures \ref{Fig9}a-\ref{Fig9}c display the ten most important words according to each method. The words identified by our approach are mostly associated with negative sentiment. This reflects an established phenomenon from psychology which poses that negative sentiments tend to outweigh positive ones \citep{baumeister2001bad}. Here, RATE and the GBM mimic method exhibit more agreeable word rankings (Spearman's $\rho =$ 0.63) than either do with the RF mimic (Spearman's $\rho =$ 0.31 and 0.38, respectively). Figure~\ref{Fig9}d shows how the test accuracy decreases as words are shuffled (in order of their importance). This result shows that variables identified by RATE and the GBM mimic models had a much larger impact on being able to predict out-of-sample variation. The RF mimic model struggled overall with this task and exhibited poor predictive performance when evaluated on the BNN predicted probabilities for the held-out data ($R^2\approx0.4$ for the RF mimic versus $R^2\approx0.8$ for the GBM mimic), indicating that it had severely underfit the BNN's predicted probability. This resulted in the RF mimic assigning zero importance to 918 of the 1,000 words, which is why its line in Figure~\ref{Fig9}d (green) not extend as far as the lines for the other two methods. This demonstrates part of the difficulty of using tree ensemble mimic models, which require hand-tuning of the cross-validation procedure to produce good results. For this problem, we observed that a cross-validation procedure that worked well for the original data (where the model is trained on the class labels themselves) did not work when used to train a mimic model (where the model is instead trained on the predicted probabilities of the BNN).

\begin{figure}[htb!]
\centering
\subfigure[RATE]{
\includegraphics[width = 0.31\textwidth]{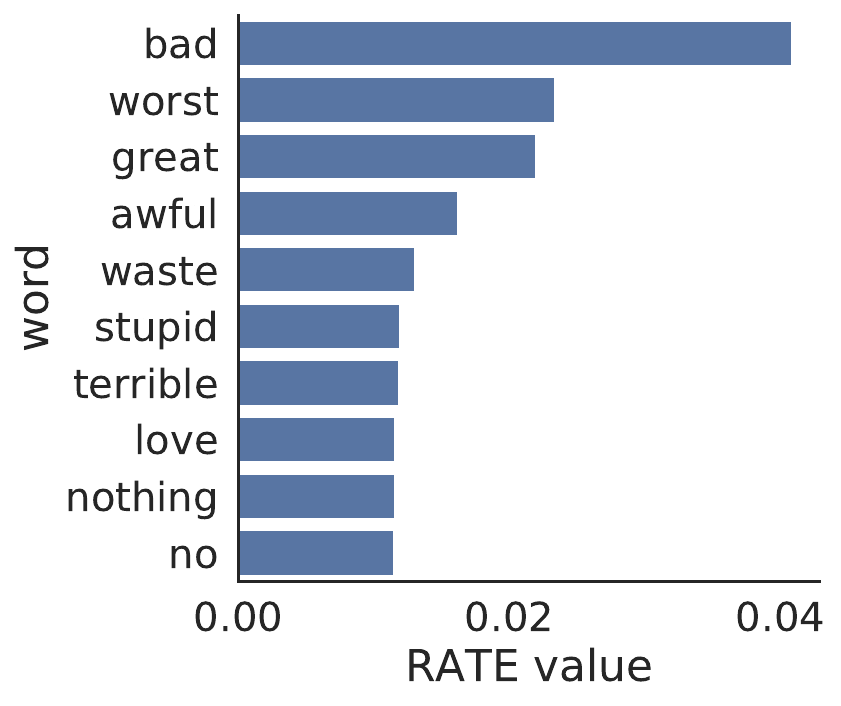}
}
\subfigure[GBM mimic]{
\includegraphics[width = 0.31\textwidth]{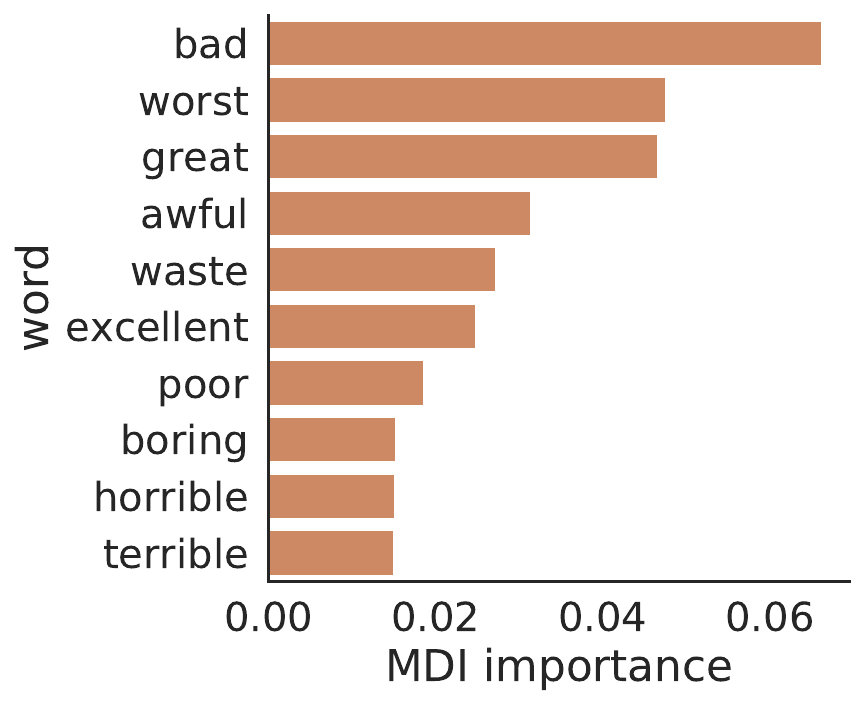}
}
\subfigure[RF mimic]{
\includegraphics[width = 0.31\textwidth]{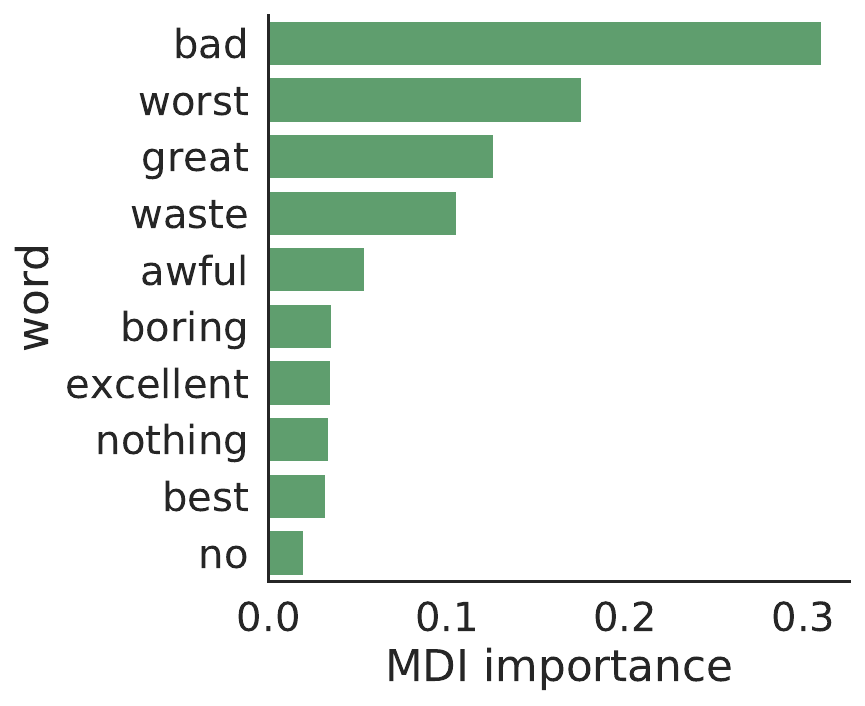}
}
\subfigure[RF mimic]{
\includegraphics[width = 0.85\textwidth]{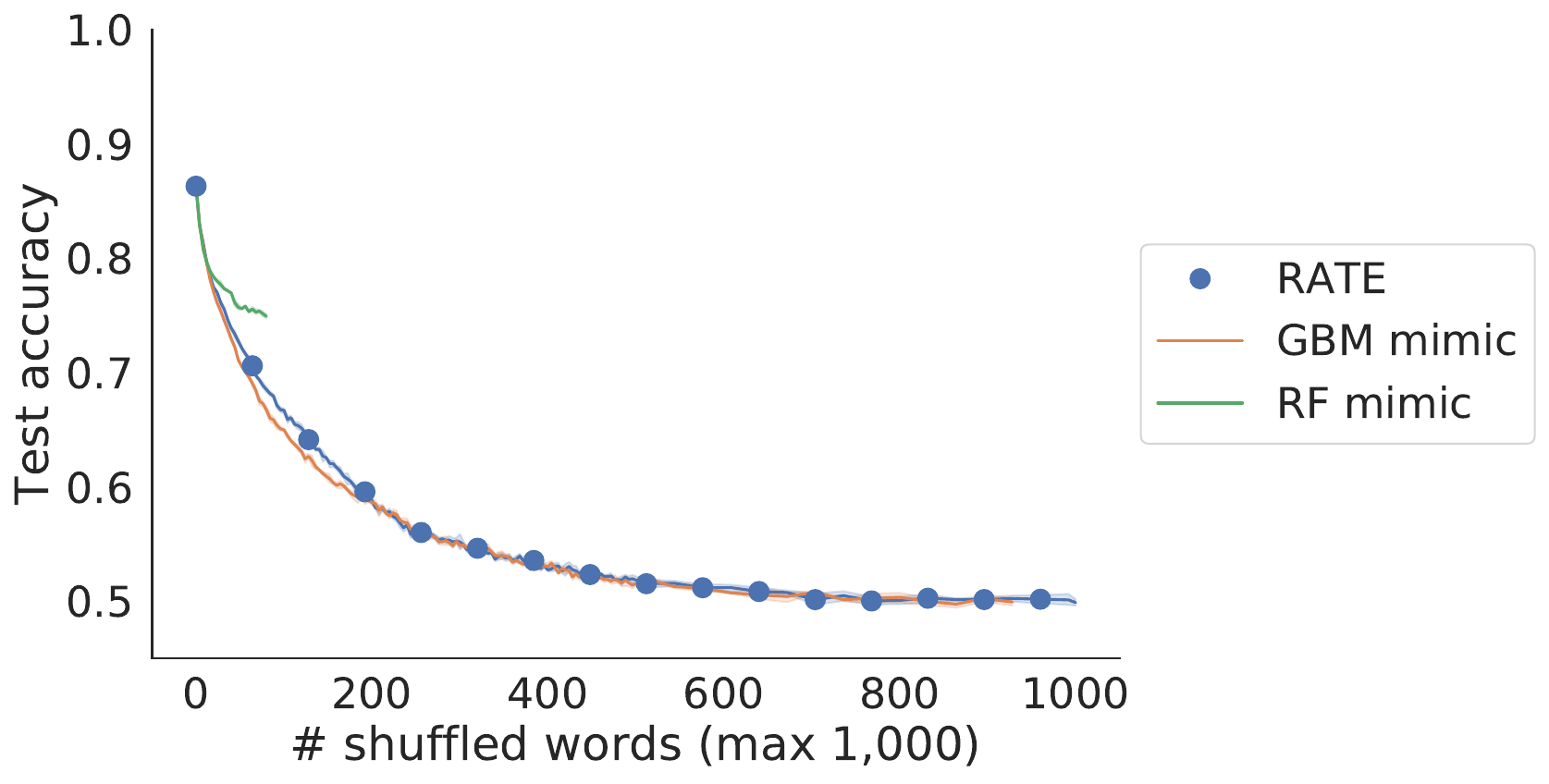}
}
\caption{Variable importance for the Bayesian neural network according to (a) RATE, (b) a random forest (RF) mimic model, and (c) a gradient boosting machine (GBM) mimic model on the Large Movie Review (IMDB) Dataset \citep{maas2011learning}. The latter two approaches use the Mean Decrease Impurity (MDI) importance score to rank words. In panel (d), we show the decrease in test accuracy as words are shuffled according to their importance (the most highly ranked words are shuffled first). A ``good'' variable importance method ranks the pixels in such a way that the test accuracy decreases quickly. Note that the random forest (green) assigns zero importance to 918/1000 words, meaning that its line does not extend beyond 182 shuffled words}
\label{Fig9}
\end{figure}

\section{Relative Centrality Measures for Groups of Variables} \label{sec:grouprate}

Depending on the application setting, one might be interested in assessing the joint global importance for multiple input variables at a time. For example, assume that we have prior knowledge about how sets of variables are related (e.g., a collection of SNPs being inside the boundary of gene) \citep{Wu:2010aa} and we are interested in ranking these groups rather than individual predictors. We may extend the univariate RATE criterion in Equation \eqref{eq:kld-j} for these types of set-based analyses. Let $\mathcal{S}_g$ denote the $g$-th collection of input variables $\{j: j\in\mathcal{S}_g\}$. As done in the univariate case, once we have access to draws from the posterior distribution of the effect size analogue $\tbbeta$, we may conformably partition the mean vector and covariance/precision matrices with respect to the $g$-th group of input variables as follows
\begin{align*}
\bmu = \beginmat{cc} \bmu_g \\ \bmu_{-g} \endmat, \quad \quad \bm{\Omega} = \beginmat{cc} \bm{\Omega}_{g} & \bm{\Omega}^{*\T}_{-g}\\ \bm{\Omega}^{*}_{-g} & \bm{\Omega}_{-g}\endmat, \quad \quad \bm{\Lambda} = \beginmat{cc} \bm{\Lambda}_{g} & \bm{\Lambda}^{*\T}_{-g}\\ \bm{\Lambda}^{*}_{-g} & \bm{\Lambda}_{-g}\endmat.
\end{align*}
where $\bm{\Omega}^{*}_{-g}$ and $\bm{\Lambda}^{*}_{-g}$ are used to denote the covariance and precision matrices between variables inside and outside of the set $\mathcal{S}_g$, respectively. Following the same logic used to derive Equation \eqref{eq:kld-j}, the RATE criterion to assess the centrality of group $g$ is given as
\begin{equation} \label{eq:kld-J}
\text{KLD}_g = \frac{1}{2}\bigg[\tr(\bOmega_{-g}\bLambda_{-g})-\log\left|\bOmega_{-g}\bLambda_{-g}\right|-\left(p-|\mathcal{S}_g|\right)+\bmu_g^{\T}\bDelta_g\bmu_g\bigg] \,,
\end{equation} 
where $|\mathcal{S}_g|$ is used to denote the cardinality of the $g$-th group, and $\bDelta_j = \bm{\Lambda}^{*\T}_{-g}\bm{\Lambda}_{-g}^{-1}\bm{\Lambda}^{*}_{-g}$ and characterizes the implied linear rate of change of information when the effect of all predictors in the $g$-th group are absent from the model. Throughout this section, we will refer to Equation \eqref{eq:kld-J} as the groupRATE criterion.

\subsection{Assessing Gene Importance in Genome-wide Association Studies} 

To demonstrate the groupRATE criterion, we turn to a genome-wide association (GWA) study of a heterogeneous stock of mice dataset from the Wellcome Trust Centre for Human Genetics \citep[\url{http://mtweb.cs.ucl.ac.uk/mus/www/mouse/index.shtml}]{Valdar:2006aa}. We focus on analyzing two quantitative traits: body mass index (BMI) and high-density lipoprotein (HDL) content. This dataset contains $n\approx$ 2000 and $p\approx$ 10000 single nucleotide polymorphisms (SNPs) with minor allele frequencies above 5\% --- with exact numbers varying slightly depending on the phenotype. In the traditional genome-wide association (GWA) framework, SNPs are individually tested for their marginal importance; however, this approach has been shown to have drawbacks and can suffer from low power when the architecture of a trait is complex \citep{manolio2009finding,yang2010common,Visscher:2012aa,Yang:2014aa}. As a result, recent approaches have aimed to combine SNPs within a chromosomal region to detect more biologically relevant genes and enriched pathways \citep{liu2010versatile,Ionita-Laza:2013aa,Nakka:2016aa,Zhu:2018aa,Cheng:2019aa}. Our interpretable Bayesian neural network framework can be used for similar tasks using groupRATE. 

Here, we use the Mouse Genome Database (MGD) \citep[\url{http://www.informatics.jax.org}]{Blake:2003aa} and define groups as collections of SNPs with genomic positions that fall within the same gene (or pseudogene). For simplicity, we eliminate genes with completely overlapping annotations. This resulted in 3,749 total genes (or groups of SNPs) across the 20 chromosomes in the mouse genome to be analyzed. After having trained our neural network, we run groupRATE on each of these groups using Equation \eqref{eq:kld-J} to create gene importance scores. To further validate the contextual relevance of our results, we use the enrichment analysis tool Enrichr \citep{Chen2013} to identify categories in the database of Genotypes and Phenotypes (dbGaP) with an overrepresentation of the significant genes reported by groupRATE within each trait. As a baseline comparison, we perform this same set of analyses using: \textit{(i)} the absolute value of coefficients from a group lasso mimic model \citep{Yuan06modelselection,friedman2010glasso} and \textit{(ii)}  the group importance scores derived from a random forest mimic model \citep{GREGORUTTI201515}.

\begin{figure}[htb!]
\centering
\includegraphics[width=\columnwidth]{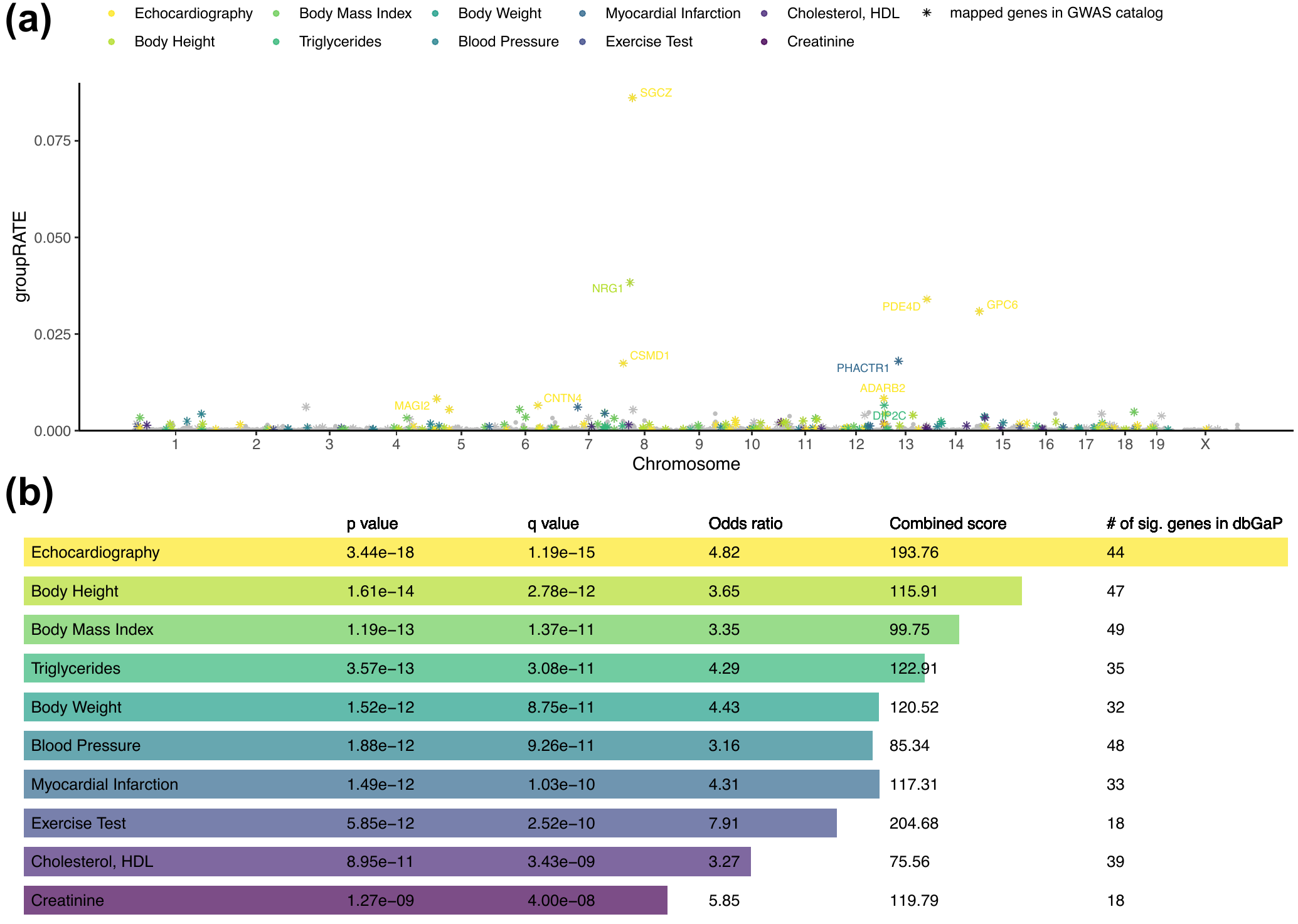}
\caption{Gene-level association results from applying groupRATE to body mass index (BMI) in the heterogeneous stock of mice dataset. Panel (a) depicts the group relative centrality measure for each gene plotted against their genomic positions. We annotate significant genes identified by the groupRATE (according to genome-wide threshold set to $T = $1/3749 genes = $2.67\times10^{-4}$) that overlap with those validated in the database of Genotypes and Phenotypes (dbGaP). In panel (b), we conduct gene set enrichment analysis using Enrichr \citep{Chen2013} to identify dbGaP categories enriched for significant gene-level associations reported by groupRATE. We highlight categories with $Q$-values (i.e., false discovery rates) less than 0.05 and annotate corresponding genes in the Manhattan plot.}
    \label{fig:bmi}
\end{figure}

\begin{figure}[htb!]
\centering
\includegraphics[width=\columnwidth]{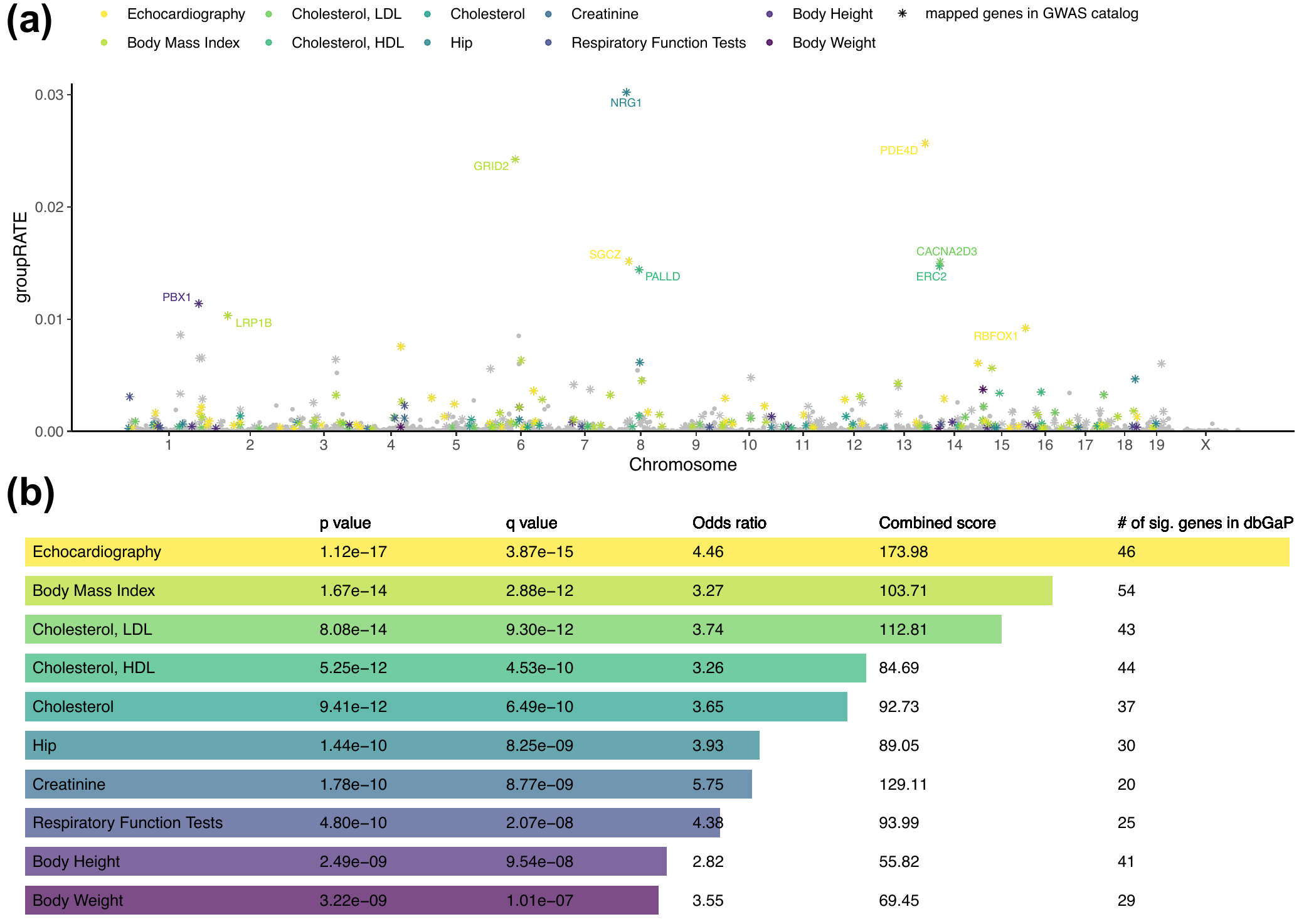}
\caption{Gene-level association results from applying groupRATE to high-density lipoprotein (HDL) content in the heterogeneous stock of mice dataset. Panel (a) depicts the group relative centrality measure for each gene plotted against their genomic positions. We annotate significant genes identified by the groupRATE (according to genome-wide threshold set to $T = $1/3749 genes = $2.67\times10^{-4}$) that overlap with those validated in the database of Genotypes and Phenotypes (dbGaP). In panel (b), we conduct gene set enrichment analysis using Enrichr \citep{Chen2013} to identify dbGaP categories enriched for significant gene-level associations reported by groupRATE. We highlight categories with $Q$-values (i.e., false discovery rates) less than 0.05 and annotate corresponding genes in the Manhattan plot.} 
    \label{fig:hdl}
\end{figure}

Overall, a large number of the significant genes identified by groupRATE have previously been annotated as having trait-specific associations in dbGaP (with genome-wide significance threshold set to $T = $1/3749 genes = $2.67\times10^{-4}$) (see Figures \ref{fig:bmi} and \ref{fig:hdl}). Previous computational studies have also shown many of these genes to have additive effects or nonlinear interaction effects that influence mice body composition. In this particular analysis, we attribute the selection of these genes to the nonlinear properties of the neural network and its ability to effectively learn complex patterns in data. For example, for BMI, the most important gene identified by groupRATE was \textit{Sgcz}, which is known to be involved with the dystrophin-associated glycoprotein complex \citep{Levy:2015aa} and has been suggested to have an ortholog that influences body mass in human beings \citep{Wang:2017aa}. Our approach also identified \textit{Nrg1} for both BMI and HDL, which is a gene that encodes a membrane glycoprotein that mediates cell-cell signaling and plays a critical role in the growth and development of multiple organ systems. Relevantly, the neuregulin gene family has been shown to associated with various aspects of metabolic health \citep{Wang:2014aa,Zhang:2018aa,Comas:2019aa}. While using Enrichr with these significant genes, the top categories with $Q$-values (i.e., false discovery rate) smaller than 0.05 for BMI and HDL included ``Body Mass Index'' and ``Cholesterol, HDL'', respectively, as well as other gene sets with verified and clinically relevant connections to each of the traits (e.g., ``Body Weight'' and ``Body Height''). The group lasso and group random forest mimic models were not as successful (see figures in Supplementary Material). For example, the group lasso failed to identify any biologically relevant enriched sets of genes in both BMI and HDL (again with threshold $T = $1/3749 genes = $2.67\times10^{-4}$ for consistency). In general, groupRATE was able to identify more significant genes associated with BMI and HDL (49 and 44, respectively) than the group lasso mimic model (11 and 5, respectively) and the random forest mimic model (33 and 24, respectively).


\section{Discussion} \label{sec:discussion}

In this paper, we developed a novel global interpretability method for deep neural networks. Here, we focused on settings in which predictor variables are intrinsically meaningful and the goal is to rank these features based on their scientific relevance. We worked in a very flexible variational Bayes approach to deep learning and proposed a sample covariance operator to develop an effect size analogue for the input variables of a neural network. Next, we extended the recently proposed RelATive cEntrality (RATE) measure \citep{crawford2018variable} to our setting, provided closed-form solutions for its implementation, and developed the groupRATE criterion for estimating the importance of groups of variables. Lastly, we illustrated the performance of our framework in broad applications including computer vision, natural language processing, and statistical genetics. Our method outperforms or achieves performance on par with the state-of-the-art, while avoiding the need for a separate and (often) time consuming tuning step.  

In its current form, we have focused on demonstrating the utility of RATE and groupRATE with a particular Bayesian neural network where only the weights on the outer layer are considered as random variables (see again Figure \ref{fig:bnn-example-architecture}). Note, however, that we are not restricted to this architecture and each of the innovations we have presented can be applied to any deep learning method that provides a notion of uncertainty over the predictions. The effect size analogue is merely a multivariate summary statistic which can be derived after fitting any model. This means that, as long as one has access to empirical estimates of its posterior distribution, relative centrality measures can always be computed. While the variational Bayes framework described in our work gives an exact Gaussian posterior over $\f$, many recent works have focused on calculating approximations to the posterior of an already-trained deterministic network using Laplace approximations \citep{ritter2018scalable} or stochastic gradient descent iterates \citep{maddox2019simple}. Combining these approaches with RATE would allow variable importance calculations to be performed on an already trained deterministic network (without the need for retraining with a mean-field variational posterior on the final layer).

Motivated by these results, there are several interesting future directions that remain. For example, in the current study, we strictly focus on interpreting the significance of variables at the input layer of DNNs. However, given the network architecture that we consider, it is also possible to examine the importance of hidden layers using RATE. Essentially, if we impose interpretations onto these layers in the context of some application, then we may use the centrality measure to assess how the corresponding nodes (i.e., specific groups of input variables) contribute to predictive accuracy. One example of this would be to construct a partially connected network architecture based on the literature and hierarchical nature of biological enrichment analyses in genome-wide association studies.





One of the main limitations of RATE is the $\mathcal{O}(p^{4})$ cost of solving the linear system in $\delta_j = \bm{\lambda}^{\T}_{-j}\bm{\Lambda}_{-j}^{-1}\bm{\lambda}_{-j}$ in Equation \eq{eq:kld-j-approx}, which is the computational bottleneck given $p$ variables in the model. This calculation is made up of $p$ independent $\mathcal{O}(p^3)$ operations. If $p\approx10^2$, our analyses remain feasible, assuming that the calculations can be parallelized over some number of computing cores. However, as the dimension $p$ outstrips the available number of cores and the cost reverts to $\mathcal{O}(p^4)$. We found it empirically difficult to implement RATE on datasets with more than $p\approx10^4$ features. Unfortunately, this precludes us from applying RATE to application such as radiomics where the number of pixels can be around $p\approx10^6$ for high-quality medical images  \citep{ukbiobankbrainmri}. Radiomics would otherwise be an attractive application for RATE, as pixels have a fixed meaning across examples due to the fact that the medical images are almost always well-aligned with one another. It is possible to reduce the cost of calculating $\delta_j = \bm{\lambda}^{\T}_{-j}\bm{\Lambda}_{-j}^{-1}\bm{\lambda}_{-j}$ to $\mathcal{O}(p^2)$ using the Sherman-Morris formula. We can do this by noticing that the difference between $\bm{\Lambda}_{-j}^{-1}$ for any two values of $j$ is a rank-two matrix, since any $\bm{\Lambda}_{-j}^{-1}$ is formed by removing the $j$-th row and column from the precision matrix $\bm{\Lambda}$. Given an inverse $\bm{\Lambda}_{-j}^{-1}$, we can therefore calculate $\bm{\Lambda}_{-{(j+1)}}^{-1}$ using the Sherman-Morris formula to update $\bm{\Lambda}_{-j}^{-1}$ rather than calculating it from scratch \citep{hager1989updating}. Alternatively, we can use a similar argument to update the solution to $\bm{\Lambda}_{-j}^{-1}\bm{\lambda}_{-j}$ \citep{hammarling2008updating}. We will explore this implementation in future work.


\section{Software Availability}

Software for implementing the interpretable Bayesian neural network framework with RATE significance measures is carried out in R and Python code, which is available at \url{https://github.com/lorinanthony/RATE}.

\section{Acknowledgments}

This research was supported by grants P20GM109035 (COBRE Center for Computational Biology of Human Disease; PI Rand) and P20GM103645 (COBRE Center for Central Nervous; PI Sanes) from the NIH NIGMS, 2U10CA180794-06 from the NIH NCI and the Dana Farber Cancer Institute (PIs Gray and Gatsonis), as well as by an Alfred P. Sloan Research Fellowship awarded to Lorin Crawford. Sarah Filippi is also partially supported by the EPSRC (grant EP/R013519/1) and Jonathan Ish-Horowicz gratefully acknowledges funding from the Wellcome Trust (PhD studentship 215359/Z/19/Z). Any opinions, findings, and conclusions or recommendations expressed in this material are those of the author(s) and do not necessarily reflect the views of any of the funders. 

\bibliography{bibliography}


\clearpage

\begin{flushleft}
	{\Large{\textbf{Supplementary Material to ``Interpreting Deep Neural Networks Through Variable Importance''}}}
	\newline
	\\
	Jonathan Ish-Horowicz\textsuperscript{1}, Dana Udwin\textsuperscript{2}, Kayla Scharfstein\textsuperscript{3}, Seth Flaxman\textsuperscript{1}, Lorin Crawford\textsuperscript{2$\dagger$}, and Sarah Filippi\textsuperscript{1$\dagger$}
	\\
	\bigskip
	\bf{1} Department of Mathematics, Imperial College London, London SW7 2AZ, UK
	\\
	\bf{2} Department of Biostatistics, Brown University, Providence, RI, USA
	\\
	\bf{3} Division of Applied Mathematics, Brown University, Providence, RI, USA
	\\
	\bigskip
	$\dagger$ Corresponding E-mail: lorin\_crawford@brown.edu; s.filippi@imperial.ac.uk  
\end{flushleft}

\setcounter{figure}{0}
\setcounter{table}{0}
\setcounter{equation}{0}
\setcounter{section}{0}
\captionsetup[table]{name=Supplementary Table}
\captionsetup[figure]{name=Supplementary Figure}
\makeatletter




\section{Projection Operators in the Presence of Collinearity}\label{AppendixA}

In this section, our goal is to motivate the use of the covariance projection operator for the effect size analogue in Bayesian neural networks. We do this via a small simulation study which shows that the conventional linear estimation of regression coefficients is unstable in applications with highly collinear predictors. Here, we generate a synthetic design matrix with $n=5000$ individuals and $p=2$ covariates ($\x_1$ and $\x_2$) randomly drawn from standard normal distributions. We then assess two simulation scenarios with continuous outcomes created under the following linear model 
\begin{equation*}
\y = 2\x_1 - 2\x_2 +\bvarepsilon, \quad \quad \bbeta = [2,-2], \quad \quad \bvarepsilon\sim\cN(\bm{0},\bI).
\end{equation*}
In the first simulation scenario, $\x_1$ and $\x_2$ are uncorrelated; while, in the second scenario, the two covariates are set to share a Pearson correlation coefficient of $\rho = 0.999$. In each case, we compare the classic ordinary least squares (OLS) estimate for regression coefficients $\widehat{\bbeta} = (\X^{\T}\X)^{-1}\X^{\T}\y$ and the proposed covariance effect size analogue $\tbbeta = [\text{cov}(\x_1,\y),\text{cov}(\x_2,\y)]$. Figure \ref{fig:Simulation} depicts the results for both cases repeated 100 different times. In Supplementary Figure \ref{fig:Simulation}A, we see that both types of estimators are able to properly capture the true effects when the predictors are uncorrelated. This finding is expected. However, in the extremely collinear scenario with $\x_1\approx\x_2$, the total true effect size in the simulation is effectively equal to $\beta = 2-2 = 0$. The OLS estimators are unstable under this condition, while the covariance effect size analogues accurately and robustly estimate this value (see Supplementary Figure \ref{fig:Simulation}B).

\begin{figure}[H]
	\centering
	\includegraphics[width=\columnwidth]{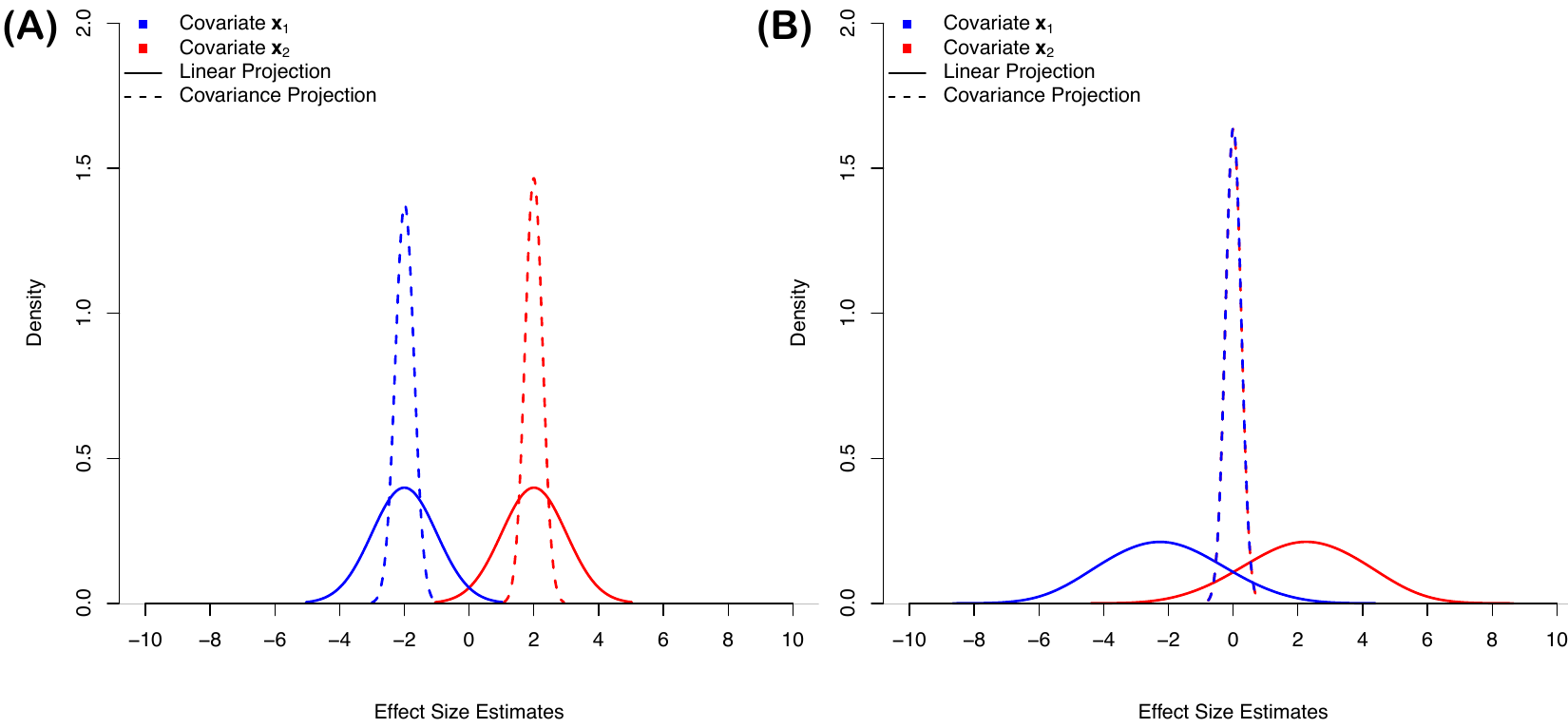}
	\caption{Results from a small simulation study showing the robustness of the covariance projection operator in the presence of collinear predictor variables. Synthetic data is generated as $\bf{y} = 2\bf{x}_1-2\bf{x}_2+\boldsymbol\varepsilon$ with $\boldsymbol\varepsilon\sim{\cal N}(\bm{0},\mathbf{I})$. OLS effect sizes are compared as a baseline. In panel (A), outcomes variables are generated with uncorrelated predictors; while in panel (B), the two covariates have a Pearson correlation coefficient of $\rho = 0.999$.}
	\label{fig:Simulation}
\end{figure}


\section{Covariance Projections and Marginal Association Tests}\label{AppendixB}

In this section, we prove a connection between the covariance projection operator and the conventional hypothesis testing strategies for marginal feature associations. Assume that we have an $n$-dimensional outcome variable $\by$ that is to be modeled by an $n\times p$ design matrix $\bX$. In linear regression, a simple (yet effective) approach is to take each covariate $\bx_j$ in turn and assess associations based upon a two-tailed alternative hypothesis. The significance of this test is then summarized via p-values (e.g.~$\widehat{p}_j$ for feature $j$), which may then be ranked in the order of importance from smallest to largest. Here, we show that the effect size analogues $\tbbeta$ correspond exactly to the test statistics for this frequented univariate approach. 

Begin by recalling that the covariance projection operator simply produces the sample covariance between a given predictor variable $\bx_j$ and the model predictions $\bm{f}$ --- where both largely positive or negative covariances are informative. Next, recall that the sample covariance between two random variables is equal to their Pearson correlation coefficient ($\rho$) multiplied by their respective standard errors $\sigma_X$ and $\sigma_Y$,
\begin{align}
\text{cov}(X,Y) = \rho\,\sigma_X\sigma_Y.
\end{align}
The standard formula for p-values starts by calculating a $t$-statistic of the following form
\begin{equation}
T_j = \rho_j \sqrt{\frac{n-2}{1-\rho_j^2}}, \quad \quad j = 1,\ldots,p.
\end{equation}
Corresponding p-values are then computed by comparing these values to a Student's $t$-distribution function under the null hypothesis --- with the intuition being that larger test statistics will result in smaller p-values. We now verify that these transformations are all monotonic --- thus, our proposed covariance effect size analogue will result in the same ranking of variable importance as the classical $t$-test.  

\begin{thm}{If two predictor variables have covariance effect size analogues such that $\tbeta_1 = \text{cov}(\x_1,\f) > \text{cov}(\x_2,\f)=\tbeta_2$, then the resulting p-values from a t-test with these features will have the relationship $\widehat{p}_1 < \widehat{p}_2$.}
\end{thm}
\begin{proof}
	Consider the covariance projection operation on two different predictor variables, $\text{cov}(\x_1,\f) > \text{cov}(\x_1,\f)$. Since standard deviations are positive
	\begin{align*}
	\text{cov}(\x_1,\f)\sigma_{\x_1}\sigma_{\f} > \text{cov}(\x_1,\f)\sigma_{\x_2}\sigma_{\f} \quad \quad \Longleftrightarrow \quad \quad \rho_1 > \rho_2.
	\end{align*}
	The same applies when multiplying both sides by $\sqrt{n-2}$. Also note that since we are concerned with the magnitude of covariances (and subsequently correlations), 
	\begin{align*}
	\rho_1 > \rho_2 \quad \Longleftrightarrow \quad \quad \sqrt{1-\rho_1} \le \sqrt{1-\rho_2}.
	\end{align*}
	Therefore we conclude that 
	\begin{align*}
	\rho_1 \sqrt{\frac{n-2}{1-\rho_1^2}} > \rho_2\sqrt{\frac{n-2}{1-\rho_2^2}} \quad \quad \Longleftrightarrow \quad \quad T_1 > T_2.
	\end{align*}
	Since the distribution function is monotonic, $\widehat{p}_1 < \widehat{p}_2$.
\end{proof}


\section{Training Procedure for Bayesian Neural Networks} \label{sec:bnn-archtecture-and-training}

We now detail the Bayesian neural network (BNN) architectures and training procedures used to derive the results presented in the main text: 
\begin{itemize}
	\item In the simulation study detailed in Section \ref{subsec:sim-studies}, the network consisted of two hidden, fully-connected layers with 32 and 16 units each and rectified linear unit (ReLU) activations. The output layer is specified as a hierarchical Bayesian model, as described in Section \ref{subsec:bnn-arch-for-rate}, with a sigmoid activation.
	\item For the MNIST study in Section \ref{subsec:results-mnist} and the Large Movie Review (IMDB) sentiment analyses, the network had a convolutional layer with 32 filters and stride 5, whose flattened output was passed to two fully-connected layers with 256 and 128 units and ReLU activation. The final layer was again specified with Bayesian hierarchical priors and a sigmoid activation.
	\item In the groupRATE application to a mice genome-wide association (GWA) study in Section \ref{sec:grouprate}, the network is made up of a series of three fully-connected layers with 512 units and ReLU activation. We applied dropout to each of these layers with rates of 0.5, 0.5, and 0.2, respectively. The output layer is again specified as in Section \ref{subsec:bnn-arch-for-rate} but with activation set to be the identity function, since this application focused on modeling continuous traits. 
\end{itemize}
In all four cases, the BNNs were trained for 50 epochs and implemented early stopping (with a patience of 2 epochs) based on the accuracy of \textit{(i)} a held-out validation set that contained 30\% of the training examples (for the simulations, MNIST, and IMBD analyses), or \textit{(ii)} the mean squared error on the training set. The latter is done for the mice genetic study since there was insufficient amount of data for a distinct validation set (total sample sizes $n\approx$ 2000). The Adam optimizer with a learning rate of $1\times 10^{-3}$ was used in all three cases \cite{kingma2014adam}.

\section{Cross-Validation of Mimic Models} \label{sec:cv-mimic-models}

For results in the main text, we used random search with 5-fold cross-validation for the random forest, gradient boosting machine, group lasso, and group random forest mimic models. Each random search fit 30 different models using \texttt{scikit-learn} \cite{scikit-learn}.





\clearpage 
\newpage

\section{Additional Supplementary Figures}

\begin{figure}[H]
	\centering
	\includegraphics[width=\textwidth]{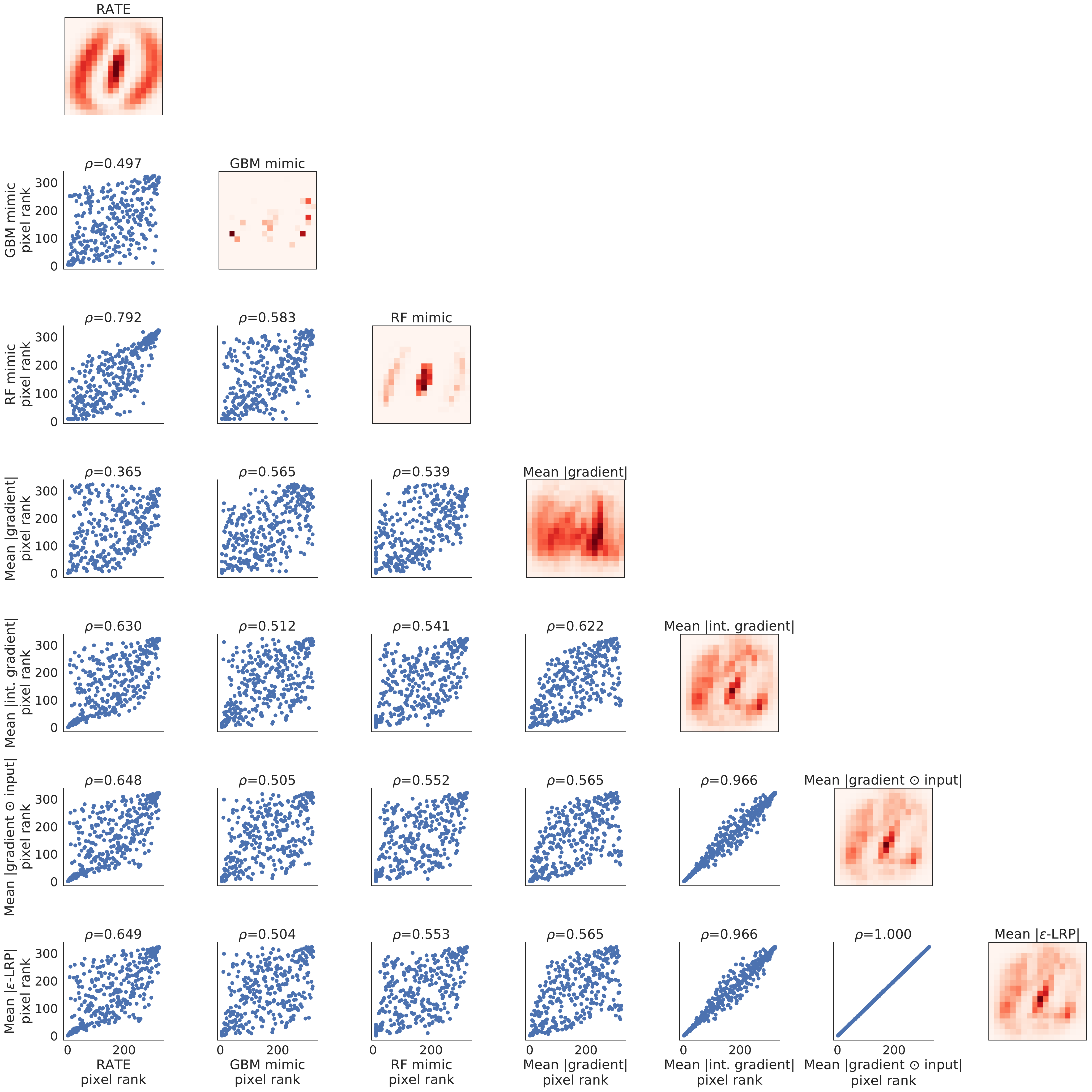}
	\caption{Pixel importance according to all the variable importance methods when classifying ones and zeros in MNIST. As the network only contained ReLU activations, gradient$\odot$input and $\varepsilon$-LRP are exactly equivalent \cite{ancona2017towards}.}
	\label{fig:mnist_zeroone_supplemental}
\end{figure}

\begin{figure}[H]
	\centering
	\includegraphics[width=\textwidth]{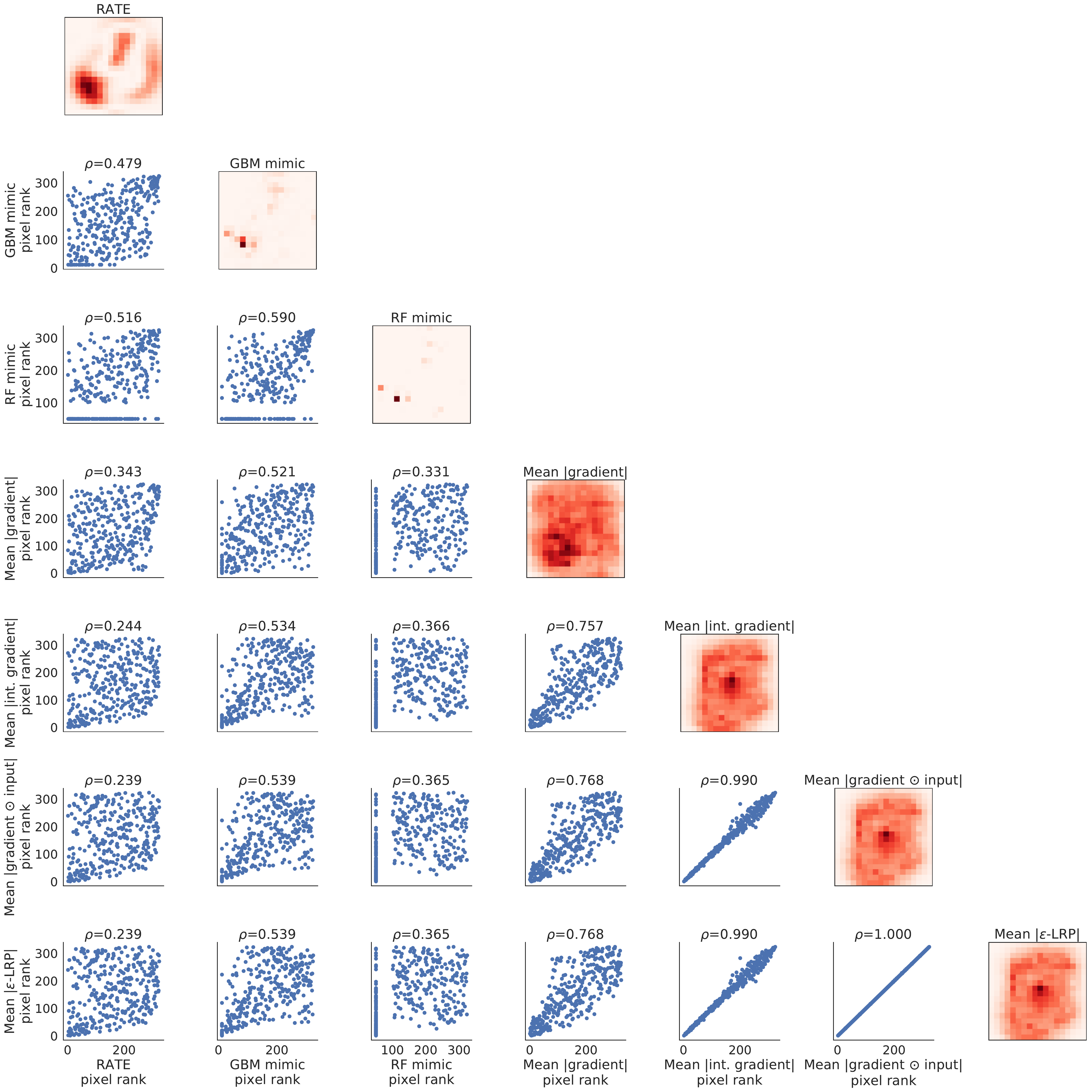}
	\caption{Pixel importance according to all the variable importance methods when classifying odds and evens in MNIST. As the network only contained ReLU activations, gradient$\odot$input and $\varepsilon$-LRP are exactly equivalent \cite{ancona2017towards}.}
	\label{fig:mnist_oddeven_supplemental}
\end{figure}

\begin{figure}[htb!]
	\centering
	\includegraphics[width=\columnwidth]{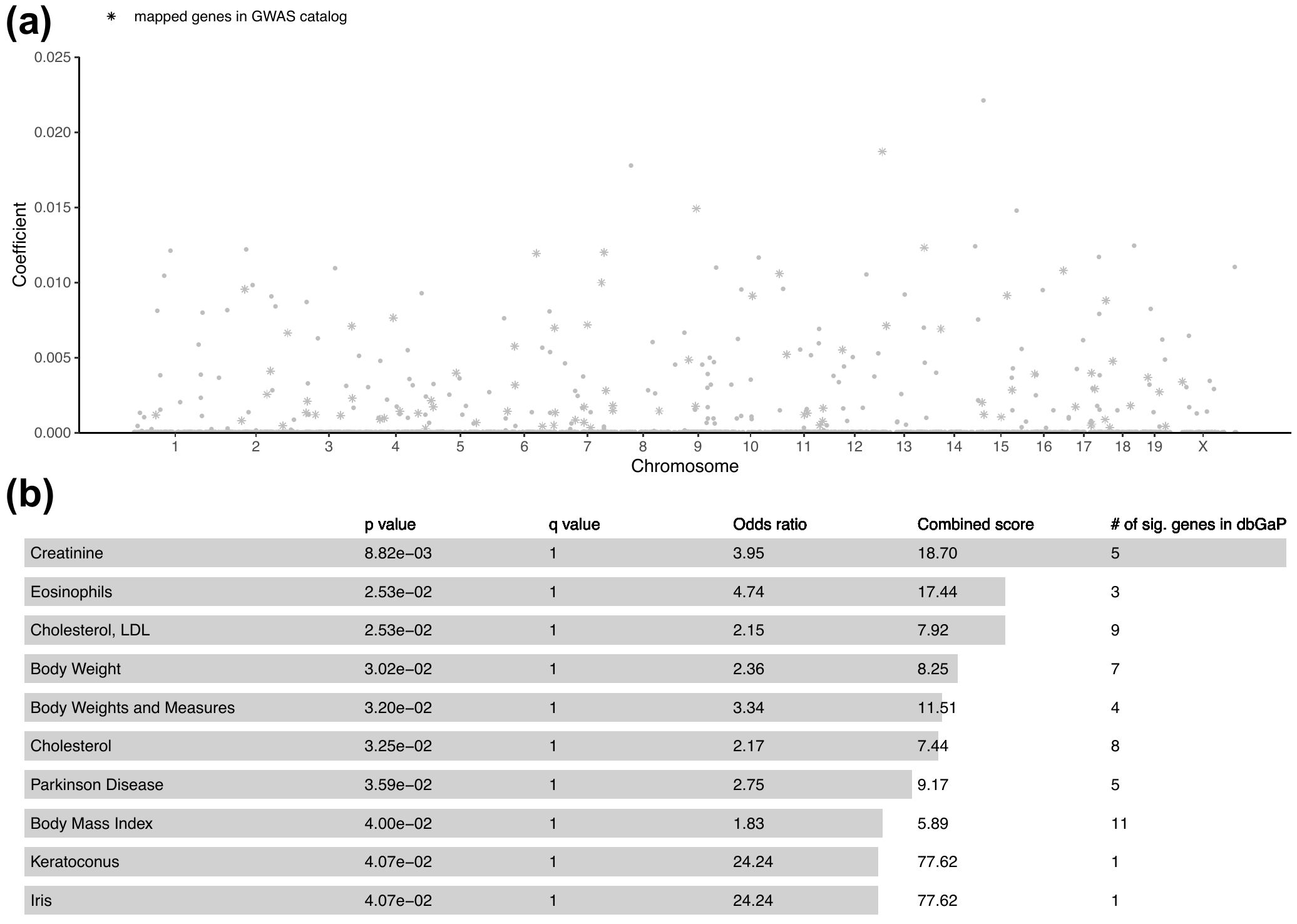}
	\caption{Gene-level association results from applying a group lasso mimic model to body mass index (BMI) in the heterogeneous stock of mice dataset. Panel (a) depicts the absolute value for each gene coefficient plotted against their genomic positions. We annotate significant genes identified by the group lasso (according to genome-wide threshold set to $T = $1/3749 genes = $2.67\times10^{-4}$) that overlap with those validated in the database of Genotypes and Phenotypes (dbGaP). In panel (b), we conduct gene set enrichment analysis using Enrichr \cite{Chen2013, 10.1093/nar/gkw377} to identify dbGaP categories enriched for significant gene-level associations reported by the group lasso. We highlight categories with $Q$-values (i.e., false discovery rates) less than 0.05 and annotate corresponding genes in the Manhattan plot.} 
	\label{Fig_S4}
\end{figure}

\begin{figure}[htb!]
	\centering
	\includegraphics[width=\columnwidth]{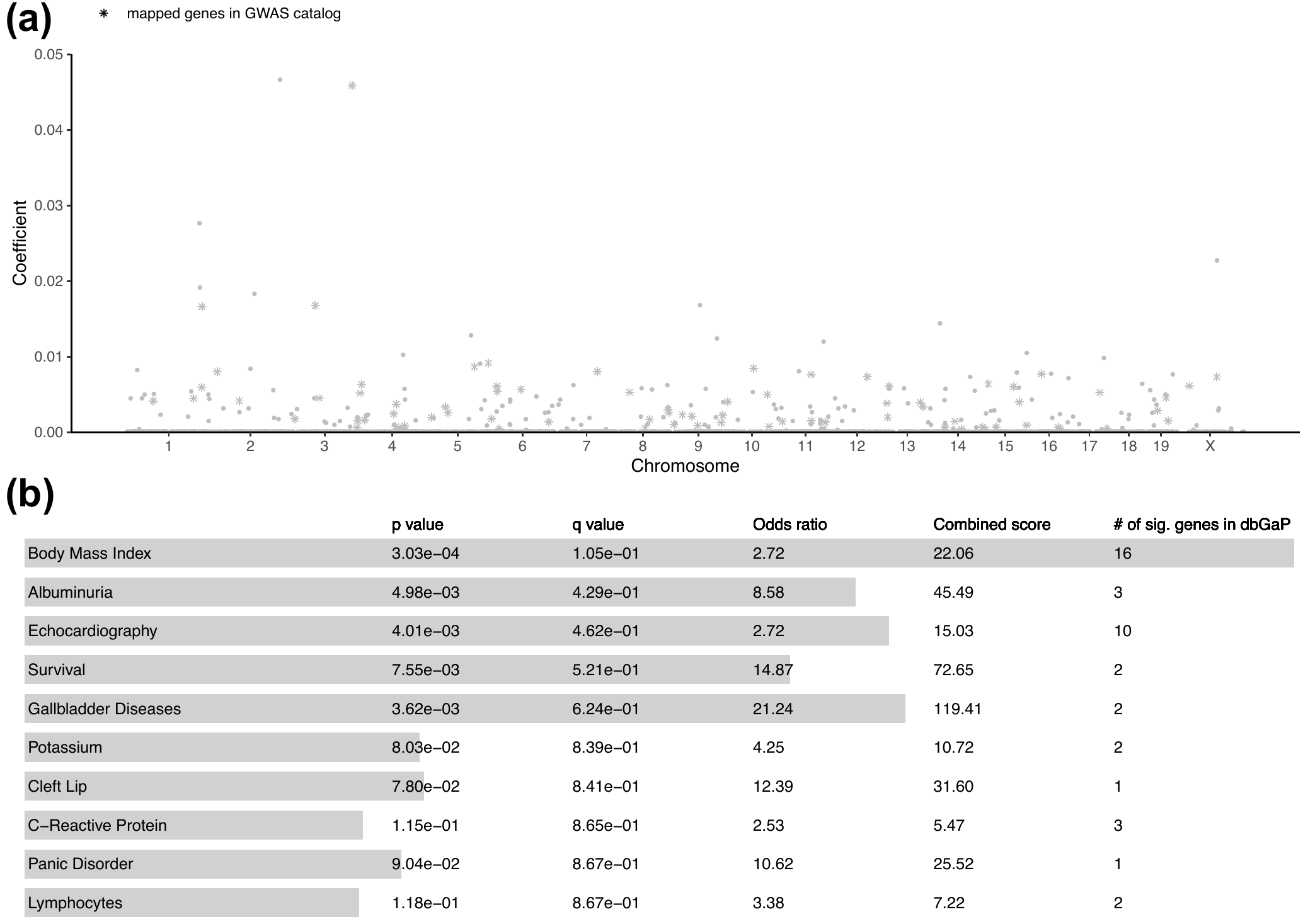}
	\caption{Gene-level association results from applying a group lasso mimic model to high-density lipoprotein (HDL) content in the heterogeneous stock of mice dataset. Panel (a) depicts the absolute value for each gene coefficient plotted against their genomic positions. We annotate significant genes identified by the group lasso (according to genome-wide threshold set to $T = $1/3749 genes = $2.67\times10^{-4}$) that overlap with those validated in the database of Genotypes and Phenotypes (dbGaP). In panel (b), we conduct gene set enrichment analysis using Enrichr \cite{Chen2013, 10.1093/nar/gkw377} to identify dbGaP categories enriched for significant gene-level associations reported by the group lasso. We highlight categories with $Q$-values (i.e., false discovery rates) less than 0.05 and annotate corresponding genes in the Manhattan plot.} 
	\label{Fig_S5}
\end{figure}

\begin{figure}[htb!]
	\centering
	\includegraphics[width=\columnwidth]{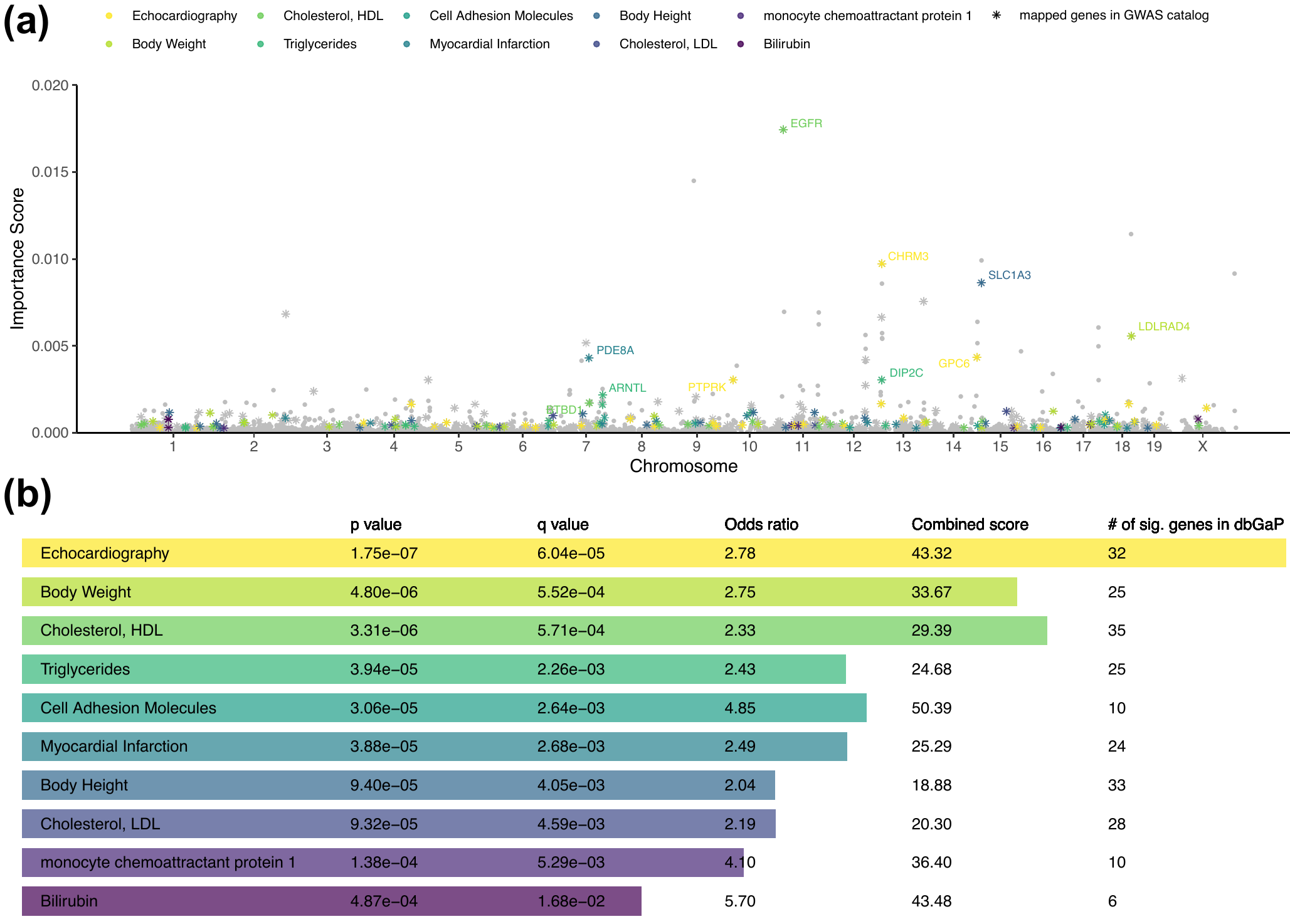}
	\caption{Gene-level association results from applying a group random forest mimic model to body mass index (BMI) in the heterogeneous stock of mice dataset. Panel (a) depicts the importance score for each gene plotted against their genomic positions. We annotate significant genes identified by the group random forest (according to genome-wide threshold set to $T = $1/3749 genes = $2.67\times10^{-4}$) that overlap with those validated in the database of Genotypes and Phenotypes (dbGaP). In panel (b), we conduct gene set enrichment analysis using Enrichr \cite{Chen2013, 10.1093/nar/gkw377} to identify dbGaP categories enriched for significant gene-level associations reported by the group random forest. We highlight categories with $Q$-values (i.e., false discovery rates) less than 0.05 and annotate corresponding genes in the Manhattan plot.} 
	\label{Fig_S6}
\end{figure}

\begin{figure}[htb!]
	\centering
	\includegraphics[width=\columnwidth]{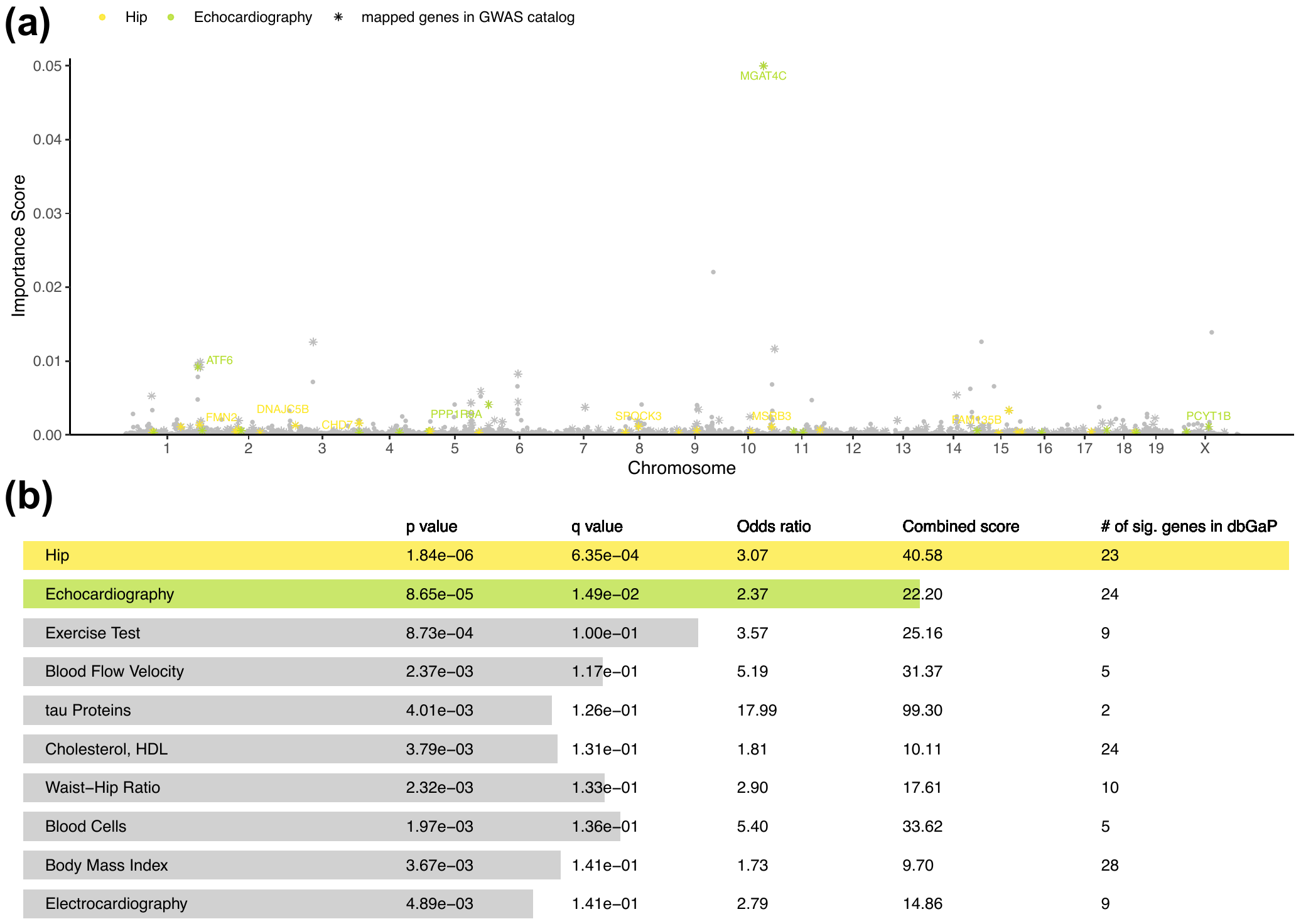}
	\caption{Gene-level association results from applying a group random forest mimic model to high-density lipoprotein (HDL) content in the heterogeneous stock of mice dataset. Panel (a) depicts the importance score for each gene plotted against their genomic positions. We annotate significant genes identified by the group random forest (according to genome-wide threshold set to $T = $1/3749 genes = $2.67\times10^{-4}$) that overlap with those validated in the database of Genotypes and Phenotypes (dbGaP). In panel (b), we conduct gene set enrichment analysis using Enrichr \cite{Chen2013, 10.1093/nar/gkw377} to identify dbGaP categories enriched for significant gene-level associations reported by the group random forest. We highlight categories with $Q$-values (i.e., false discovery rates) less than 0.05 and annotate corresponding genes in the Manhattan plot.} 
	\label{Fig_S7}
\end{figure}



\end{document}